\newtheorem{theorem}{Theorem}
\newtheorem{lemma}{Lemma}
\begin{document}
\large
\title{\textbf{Fast Communication-efficient Spectral Clustering Over Distributed Data}}
\author{
Donghui Yan$^{\dag\S}$, Yingjie Wang$^{\ddag\S }$, Jin Wang$^{\ddag\S }$, \\Guodong Wu$^{\P }$, Honggang Wang$^{\ddag\S}$
\vspace{0.1in}\\
$^\dag$Department of Mathematics and Program in Data Science\vspace{0.06in}\\
$^\ddag$Department of Electrical and Computer Engineering\vspace{0.06in}\\
$^{\S}$University of Massachusetts, Dartmouth, MA\vspace{0.1in}\\
$^{\P }$Lovelace Respiratory Research Institute, Albuquerque, NM 
\\
}
\date{\today}
\maketitle

\begin{abstract}
\noindent
The last decades have seen a surge of interests in distributed computing thanks to advances in clustered computing and 
big data technology. Existing distributed algorithms typically assume {\it all the data are already in one place}, and divide the data 
and conquer on multiple machines. However, it is increasingly often that the data are located at a number of distributed sites, 
and one wishes to compute over all the data with low communication overhead. For spectral clustering, we propose 
a novel framework that enables its computation over such distributed data, with ``minimal" communications while a major speedup 
in computation. The loss in accuracy is negligible compared to the non-distributed setting. Our approach 
allows local parallel computing at where the data are located, thus turns the distributed nature of the data into a blessing; the speedup 
is most substantial when the data are evenly distributed across sites. Experiments on synthetic and large UC Irvine 
datasets show almost no loss in accuracy with our approach while about 2x speedup under various settings with two distributed sites.
As the transmitted data need not be in their original form, our framework readily addresses the privacy concern for data sharing in 
distributed computing. 
\end{abstract}
\normalsize

\section{Introduction}
\label{section:introduction}
Spectral clustering \cite{ShiMalik2000, NgJordan2002, Luxburg2007, mynips2008, YanHuangJordan2009tech} 
refers to a class of clustering algorithms that work on the eigen-decomposition of the Gram matrix formed by the 
pairwise similarity of data points. It is widely acknowledged as the method of choice for clustering, due to its 
typically superior empirical performance, its flexibility in capturing a range of geometries such as nonlinearity 
and non-convexity \cite{NgJordan2002}, and its nice theoretical properties \cite{SpielmanTeng1996,chung97, GineKoltchinskii2006,LuxburgBelkin2008, CF}. Spectral clustering has been successfully 
applied to a wide spectrum of applications, including parallel processing \cite{Simon1991}, image segmentation 
\cite{ShiMalik2000,NystromSpectral}, robotics \cite{OlsonWalterTL2005,BrunskillKollarRoy2007}, web search \cite{CaiHeLMW2004}, spam detection, 
social network mining \cite{WhiteSmyth2005,MiaoSongZhangBai2008,KuruczBCL2009}, and market research \cite{ChangHuangWL2007} etc. 
\\
\\
Most existing spectral clustering algorithms are ``local" algorithms. That is, they assume all the data are in one place. 
Then either all computation are carried out on a single machine, or the data are split and delegated to a number of
nodes (a.k.a. machines or sites) for parallel computation \cite{ChenSongChang2011,HefeedaGao2012}. It is 
possible that the data may be initially stored at several distributed nodes but then pushed to a central server which 
split and re-distribute the data. Such a case is also treated as local, as all the data are in one place at a certain stage of 
the data processing. However, with the emergence of big data, it is increasingly often that the data of interest are {\it distributed}. 
That is, the data are stored at a number of distributed sites, as a result of diverse data collection channels or business 
operation etc. For example, a major retail vendor, such as {\it Walmart}, has sales data collected at {\it walmart.com}, or 
its {\it Walmart} stores, or its warehouse chains---{\it Sam's Club} etc. Such data from different sources are distributed as 
they are owned by different business groups, even all within the same corporation. Indeed there is no one central data center
at Walmart, rather the data are either housed at Walmart's Arkansas headquarter, or its e-commerce labs in the San Francisco 
Bay area, CA, due possibly to historical reason---Walmart was a pioneer of commercial vendors for a large scale adoption of 
digital technology in the late seventies to early eighties, while started its e-commerce business during the last decade. Many 
applications, however, would require data mining or learning of a global nature, that is, to use data from all the distributed sites, 
as that would be a more faithful representation of the real world or would yield better results due to a larger data size.
\\
\\
There are several challenges in the spectral clustering of data across distributed sites. The data at individual 
sites may be large. Many existing divide-and-conquer type of algorithms \cite{ChenSongChang2011,HefeedaGao2012} 
would collect data from distributed sites first, re-distribute the working load and then aggregate results 
computed at individual sites. The communication overhead will be high. Moreover, the data at individual sites 
may not have the same distribution. Additionally, 
the owners of the data at individual sites may not be willing to share either because the data entails value or the data 
itself may be too sensitive to share (not the focus of this work though). Now the question becomes, {\it can we carry out 
spectral clustering for data distributed over a number of sites without transmitting large amount data?} 
\\
\\
One possible solution is to carry out spectral clustering at individual sites and then ensemble. However, as the data 
distribution at individual sites may be very different, and to ensemble needs distributional information from 
individual sites which is often not easy to derive. Thus, an ensemble type of algorithm will not work, or, at least not in a 
straightforward way. Another possibility is to modify existing distributed algorithms, such as \cite{ChenSongChang2011,HefeedaGao2012}, 
but that would require support from the computing infrastructure---to enable a close coordination and frequent communication 
of intermediate results among individual nodes---which is not easy to implement and the solution may not be generally 
applicable.
\\
\\
Our proposed framework is based on the distortion minimizing local (DML) transformations \cite{YanHuangJordan2009tech}. 
The idea is to generate a small set of codewords (a.k.a. representative points) that preserve the underlying structure of 
the data at individual nodes. Such codewords are then transmitted to a central server (or any one of the nodes) where 
spectral clustering is carried out, and the result from spectral clustering will be populated to individual nodes so that the 
cluster membership of all points can be recovered according to correspondence information maintained at individual 
nodes. As the codewords from individual sites preserve the geometry of the data, the expected loss in clustering accuracy 
w.r.t. that where all the data are in one place would be small. The key benefit of our approach lies in the fact that the 
generation of codewords is local in the sense that no distributional information from other sites is required. Additionally, 
computation of the codewords can be carried out in parallel on individual nodes thus speeds up the overall computation 
and meanwhile makes the best use of the existing computing resources.  
\\
\\
Our contributions are as follows. Motivated by emerging applications in distributed computing, we propose a new line of research---spectral clustering 
when the data are distributed over multiple sites. The DML-based framework we propose readily facilitates data mining tasks 
such as spectral clustering over distributed data while eliminating the need of big data transmission. Our approach is theoretically 
sound; the loss in the accuracy of spectral clustering, when compared to that in a non-distributed setting, vanishes when increasing 
the size of data transmission. Major computations are {\it naturally} localized and parallelized in the sense that they are carried 
out on the node where the data is stored, which would lead to a major speedup in the overall computation. Additionally, as each 
node is working on part of the full data, this has the potential benefit of divide-and-conquer in the sense that even if we add up the 
computation time at all individual nodes, the total time may still be less than that in a non-distributed setting for big enough data. 
While this work deals with spectral clustering for concreteness and for being an important methodology at its own right, really it 
represents a new paradigm in distributed computing, that is, data mining or inference over distributed data with major computation 
carried out at where the data are located while achieving the effect of using the full data at a ``minimal" communication cost.
\\
\\
The remaining of this paper is organized as follows. In Section~\ref{section:method}, we will describe our framework 
and explain its implementation. This is followed by a discussion of related work in Section~\ref{section:related}. In 
Section~\ref{section:theory}, we present theoretical analysis of our proposed approach. In Section~\ref{section:experiments}, 
we show experimental results. Finally we conclude in Section~\ref{section:conclusion}. 

\section{A framework for spectral clustering on distributed data}
\label{section:method}
The idea underlying our approach is the notion of {\it continuity}. That is, similar data would play a similar role in learning and 
inference, including clustering. Such a notion suggests a class of data transformations called {\it  distortion 
minimizing local (DML) transformation}. The idea of DML transformation is to represent the data by a small set of representative 
points (or codewords). One can think of this as a ``small-loss" data compression, or the representative set as a 
sketch of the full data. Since the representative set resembles the full data, learning based on it is expected to be 
close to that on the full data. Similar idea was explored in \cite{YanHuangJordan2009tech, coresets} and has been 
successfully applied to some computation-intensive algorithms, assuming the data are non-distribted. In all these
previous work, DMLs were mainly introduced to address the computational challenge. 
\begin{figure}[h]
\centering
\begin{center}
\hspace{0cm}
\includegraphics[scale=0.26,clip]{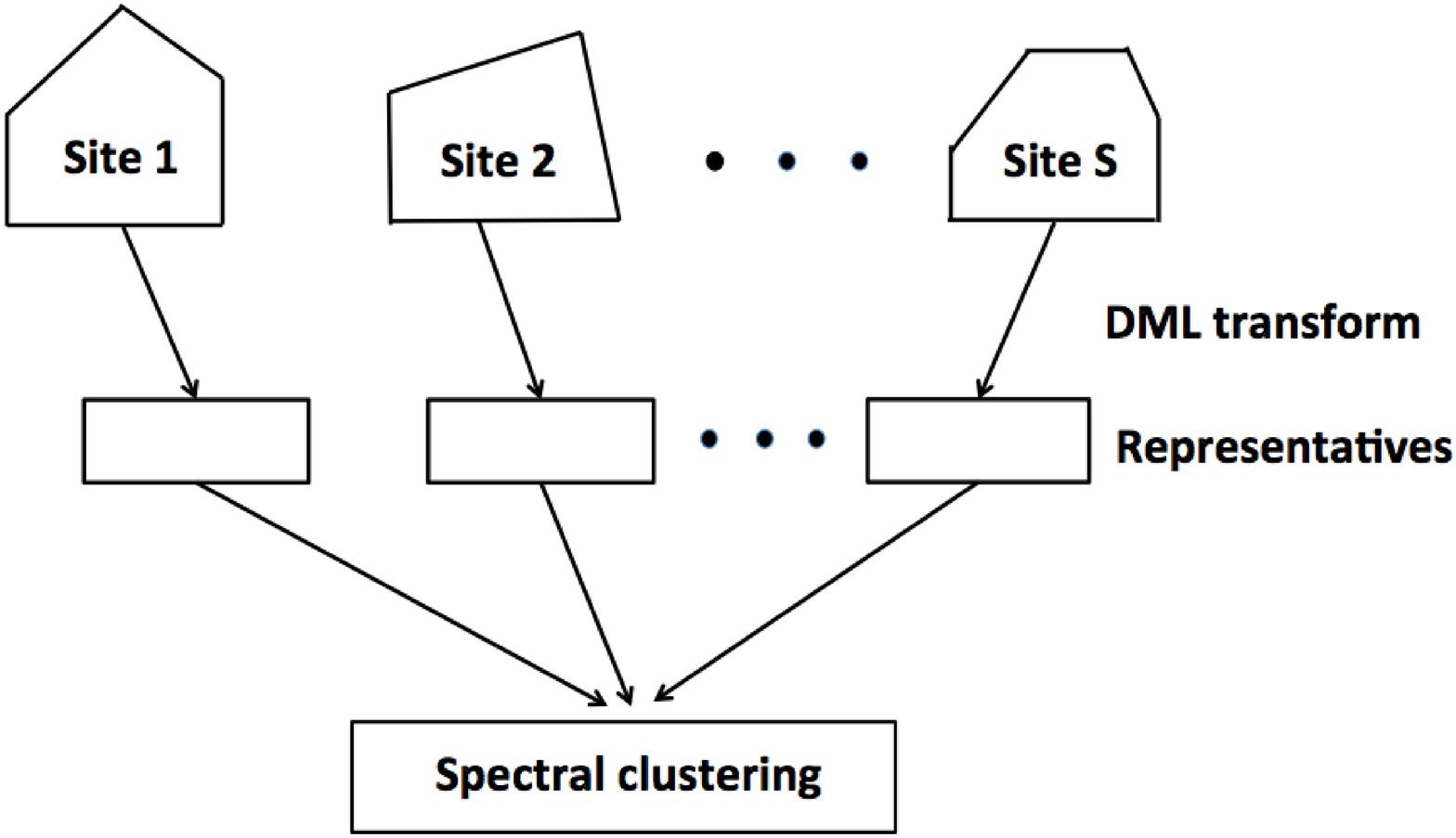}
\end{center}
\caption{\it Architecture of spectral clustering over distributed data. Data at different nodes may be of different
distributions. } 
\label{figure:distInfArch}
\end{figure}
\\
\\
DMLs can be used to enable spectral clustering over {\it distributed} data, that is, the data are not stored in one 
machine but over a number of distributed nodes. Our framework for spectral clustering over distributed data is 
surprisingly simple to implement. It consists of three steps:
 \begin{itemize}
 \item[1)] Apply DML to data at each distributed node
 \item[2)] Collect codewords from all nodes, and carry out spectral clustering on the set 
 of all codewords
 \item[3)] Populate the learned clustering membership by spectral clustering back to each distributed node.
 \end{itemize}
Figure~\ref{figure:distInfArch} is an illustration of our framework. It is clear that algorithms designed 
under this framework would eliminate the need of having to transmit large amount of data among distributed 
nodes---only those codewords need to be transmitted. 
As the codewords are DML-transformed data, data privacy may be ensured since no original data are transmitted. 
Additionally, as spectral clustering is only performed on the set of codewords, the overall computation involved 
will be greatly reduced. Also as the DMLs and the recovering of cluster membership are performed at individual 
nodes, such computation can be done in parallel. We start by a brief introduction to spectral clustering.  
\subsection{Introduction to spectral clustering}
\label{section:introSPC}
Spectral clustering works on an affinity graph over data points $X_1,...,X_N$ and seeks to find a ``minimal" graph cut. 
Depending on the choice of the similarity metric and the objective function to optimize, there are a number of variants, 
including \cite{ShiMalik2000, NgJordan2002}. Our discussion will be 
based on normalized cuts \cite{ShiMalik2000}.
\begin{figure}[h] 
\begin{center}
\includegraphics[scale=0.3,clip]{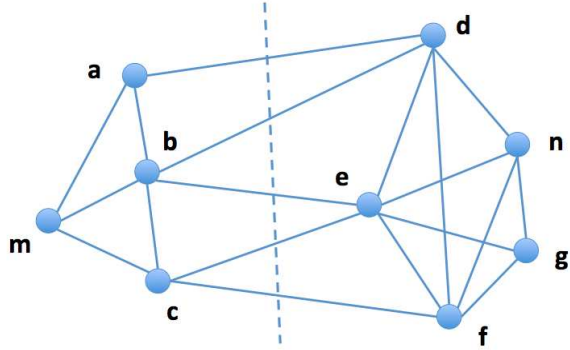}
\caption{\it Illustration of a graph cut. The cut is given by the set $\{ad,bd,be,ce,cf\}$, which partitions the 
vertices of the graph into $V=V_1 \cup V_2=\{a,b,c,m\} \bigcup \{d,e,f,n,g\}$.}
\label{figure:graphCut}
\end{center}
\end{figure} 
An {\it affinity graph} is defined as a weighted graph $\mathcal{G} = (V, \mathcal{E}, A)$
where
$V = \{X_1,...,X_N\}$ is the set of vertices, $\mathcal{E}$ is the edge set, and $A = (a_{ij})_{i,j=1}^N$
is the affinity matrix with $a_{ij}$ encoding the similarity between $X_i$ and $X_j$. 
Figure~\ref{figure:graphCut} is an illustration of graph cut. 
\\
\\
Let $V = (V_1, \ldots, V_K)$ be a partition of $V$. Define the size of the cut between $V_1$ and $V_2$ by
\begin{equation*}
\mathcal{W}(V_1, V_2) = \sum_{i \in V_1, j \in V_2} a_{ij}
~\mbox{for}~V_1, V_2 \subseteq V.
\end{equation*}
Normalized cuts seeks to find the minimal (normalized) graph cut, or equivalently, solve an 
optimization problem
\begin{equation*}  \arg\min_{V_1,...,V_K \subseteq V} \sum_{j=1}^K \frac{\mathcal{W}(V_j, V) -
\mathcal{W}(V_j, V_j)} {\mathcal{W}(V_j, V)}.
\end{equation*}
The above is an integer programming problem thus intractable, a relaxation to real values leads 
to an eigenvalue problem for the Laplacian matrix\footnote{In the following, we 
will omit the subscript $A$ when no confusion is caused.}
\begin{equation}
\label{eq:defLaplacian}
\mathcal{L}_A=D^{-\frac{1}{2}}(D-A)D^{-\frac{1}{2}},
\end{equation}
where $D=diag(d_{1},...,d_{N})$ is the degree matrix with $d_i=\sum_{j=1}^N a_{ij},i=1,...,N$. 
Normalized cuts look for the second smallest eigenvector of $\mathcal{L}_A$, and round its 
components to produce a bipartition of the graph. Similar procedure is applied to each 
of the bipartitions recursively until reaching the number of predefined clusters.
\subsection{Distortion minimizing local transformation}
\label{section:dmlt}
A key property that makes DMLs applicable to distributed data is being {\it local}. That is, such a data 
transformation can be done locally, without having to see the full data. Thus DML 
can be applied at individual distributed nodes separately. If one can pool together all 
those codewords, then overall inference or data mining can be easily carried out. Thus, as 
long as the local data transformations are fine enough, a large class of inference or data mining 
tools will be able to yield result as good as {\it using the full data}. 
\\
\\
As we are dealing with big volumes of data, a natural requirement for a DML is its computational 
efficiency while incurring very ``little" loss in information. We will briefly describe two concrete 
implementations of the DML transformation, one by {\it $K$-means clustering} and the other by 
{\it random projection trees (rpTrees)}, both proposed in \cite{YanHuangJordan2009tech}.    
\subsubsection{K-means clustering}
$K$-means clustering was developed by S. Lloyd in 1957 \cite{lloyd1982}, and 
remains one of the simplest yet popular clustering algorithms. The goal of $K$-means clustering is to 
split data into $K$ partitions (clusters) and assign each point to the ``nearest" cluster (mean). A cluster 
mean is the center of mass of all points in a cluster; it is also called {\it cluster centroid or codewords}. 
The algorithm starts with a set of randomly selected cluster 
centers, then alternates between two steps: 1) assign all points to their nearest cluster centers; 
2) recalculate the new cluster centers. The algorithm stops when the change is small according to 
a cluster quality measure. For a more details about K-means clustering, please refer to the appendix
and \cite{hartiganWong1979, lloyd1982}. 
\\
\\
When DML is implemented by $K$-means clustering, each of the distributed nodes performs $K$-means 
clustering separately. Note that, the number of clusters, $K$, may be different for a different node; the only 
requirement is that $K$ is reasonably large.
The set of representative points are taken as the cluster centers, or average of points in the same cluter 
by an appropriate metric. Empirically, the computation of $K$-means clustering scales {\it linearly} with the 
number of data points, thus it is suitable for big data.    
\subsubsection{Random projection trees}
\begin{figure}[h]
\hspace{-0.3cm}
\begin{center}
\includegraphics[scale=0.25,clip]{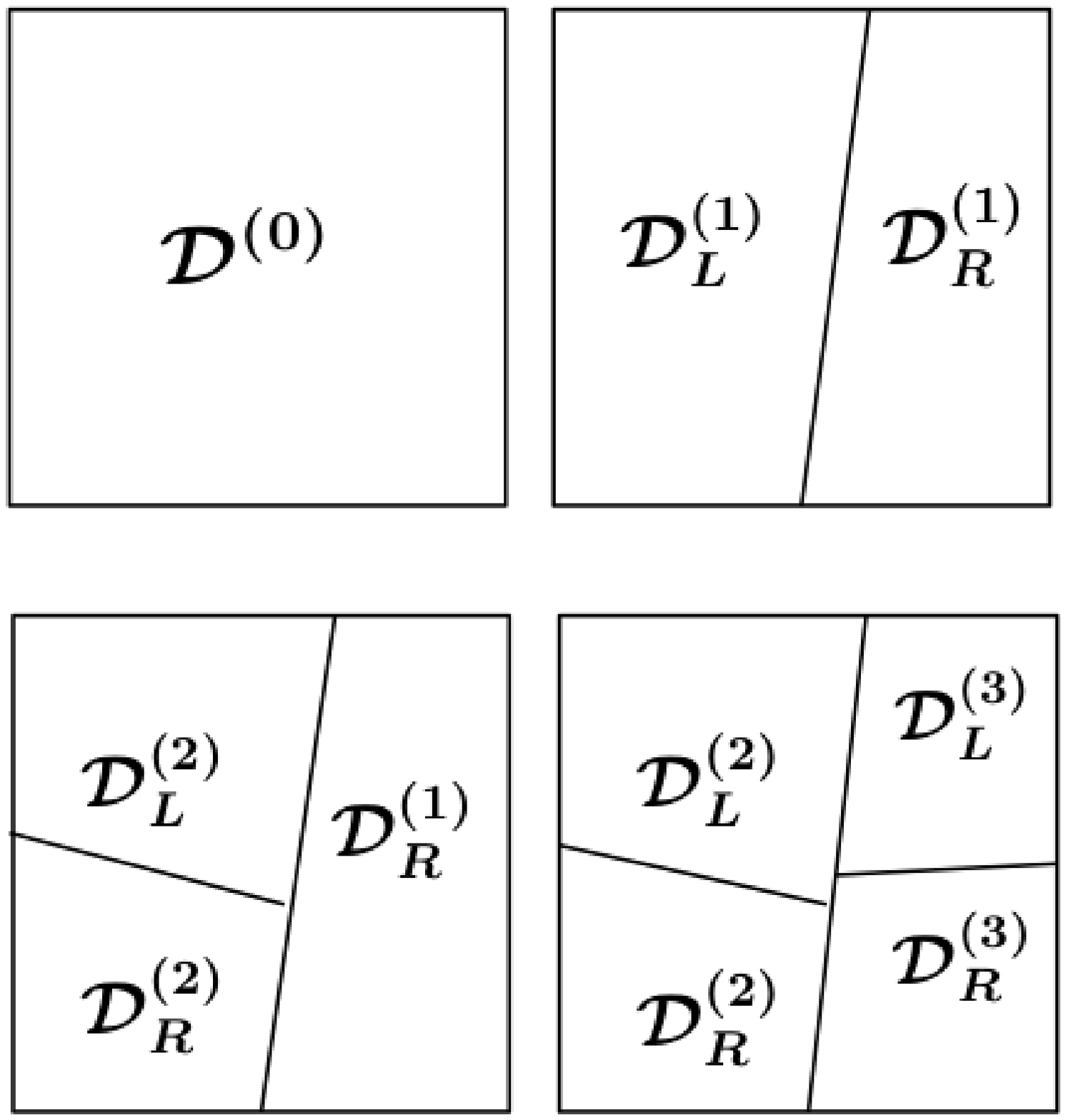}
\includegraphics[scale=0.25,clip]{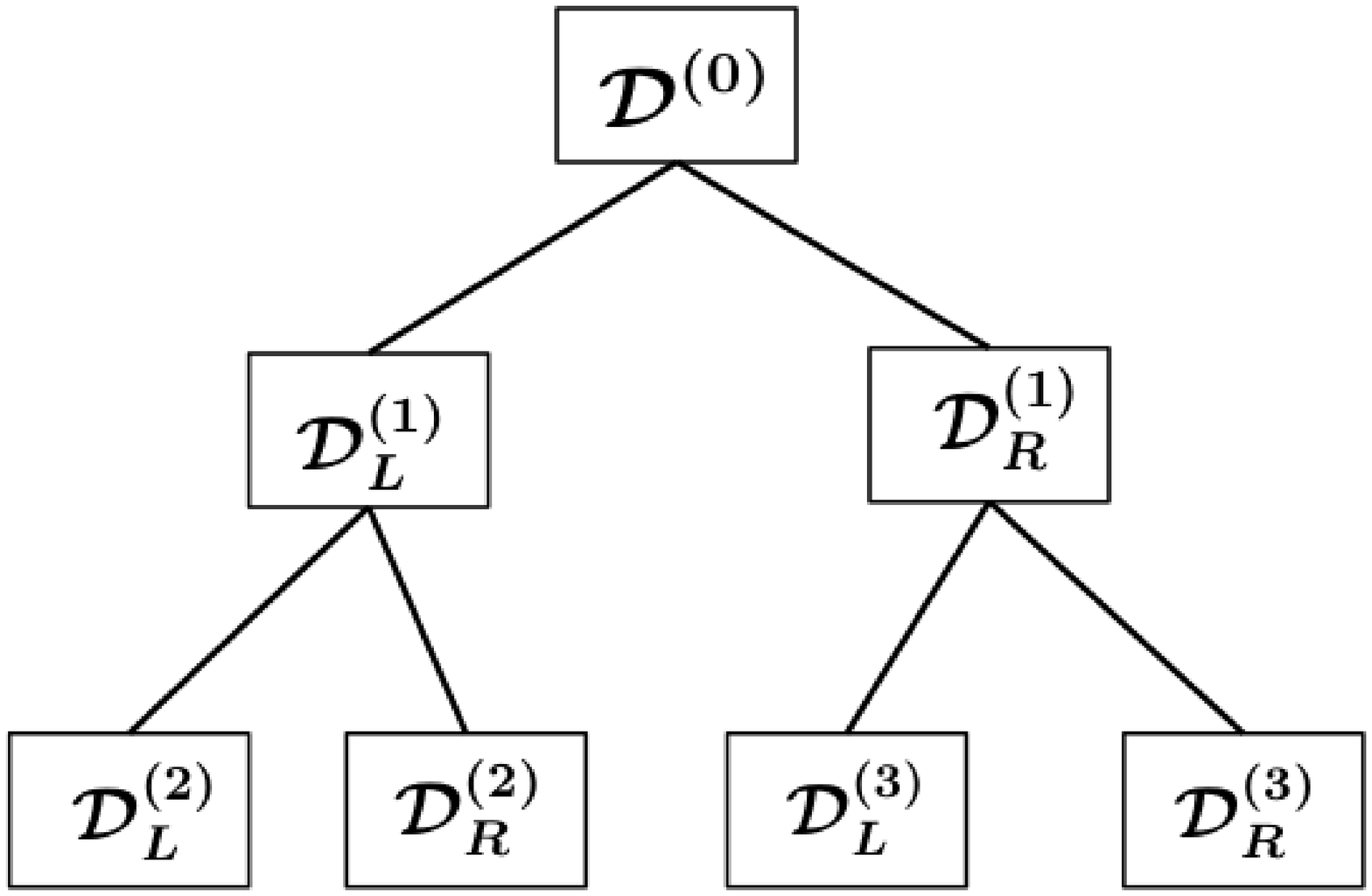}
\end{center}
\caption{\it Illustration of space partition and random projection trees (figure taken from \cite{deepTacoma2019}). 
The superscripts indicate the order of tree node split. One starts with the root node, $\mathcal{D}^{(0)}$, which 
corresponds to all the data. After the first split, $\mathcal{D}^{(0)}$ is partitioned into its two child nodes, 
$\{\mathcal{D}_{L}^{(1)}, \mathcal{D}_{R}^{(1)}\}$. The second split partitions the left child node,
$\mathcal{D}_{L}^{(1)}$, into its two child nodes, $\{\mathcal{D}_{L}^{(2)}, \mathcal{D}_{R}^{(2)}\}$. The 
third cut splits $\mathcal{D}_{R}^{(1)}$ into two new child nodes, $\{\mathcal{D}_{L}^{(3)}, \mathcal{D}_{R}^{(3)}\}$. 
This process continues until a stopping criterion is met.} 
\label{figure:rpTree}
\end{figure}
\noindent
To see how the K-D tree \cite{kdTree} can be used for a DML transformation, we first describe how a K-D tree grows. 
Let the collection of all data at a distributed site correspond to the root node, $\mathcal{D}^{(0)}$, of a tree. Now one 
variable, say $v$ (index of a variable), is selected to split the root node; the root node is split into two child nodes, 
$\mathcal{D}_L^{(1)}$ and $\mathcal{D}_R^{(1)}$, according to whether a data points has its $v$-th coordinate smaller 
or larger than a cutoff point. For each of $\mathcal{D}_L^{(1)}$ and $\mathcal{D}_R^{(1)}$, we will follow a similar procedure
recursively. 
This process continues until some stopping criterion is met. Figure~\ref{figure:rpTree} illustrates the construction of a K-D tree. 
As the tree construction can be viewed as generating a recursive space partition \cite{kdTree,DasguptaFreund2008,YanDavis2018} 
of the data, data points falling in the same leaf node would be similar (if the leaf node is ``small") and their average can be used as the representative. If the 
underlying data has a very high dimension, then a randomized variant, rpTrees \cite{DasguptaFreund2008}, can be used which 
would adapt to the geometry of the underlying data and effectively overcome the {\it curse of dimensionality}. In this work, we use 
rpTrees implemented in \cite{rpForests2018c}.  
\subsection{Algorithmic description}
\label{section:algorithm}
Now we can briefly describe how to adopt spectral clustering for distributed data. Assume that there are $S$ distributed 
sites. Apply DML (K-means clustering or rpTrees) at each site individually. Let $Y_i^{(s)}, i=1, 2, ..., n_s$ 
be the group centroids of data at site $s=1,2,...,S$. A group is either data in the same cluster if K-means clustering is 
used, or all points in the same leaf node of rpTrees. The centroids are center of mass of all points in the same group. 
Spectral clustering is performed on the set of group centroids (representative points) collected from all the $S$ sites.
An algorithmic description is given in Algorithm~\ref{algorithm:distSpect}.  
\begin{algorithm}
\caption{~~Spectral clustering for distributed data} 
\label{algorithm:distSpect}
\begin{algorithmic}[1]
\STATE $\mathcal{D}_r \gets \emptyset$;
\FOR {each site $s \in \{1,...,S\}$} 
\STATE Apply DML to data at site s; 
\STATE Let $Y_i^{(s)}, i=1, 2, ..., n_s$ be the group centroids;  
\STATE Let $W_i^{(s)}, i=1, 2, ..., n_s$ be the group sizes; 
\STATE $\mathbb{Y}_s \gets \{Y_i^{(s)}:  i=1, 2, ..., n_s \}$;
\ENDFOR 
\STATE Collect group centroids from all sites $\mathcal{D}_r \gets \cup_{s=1}^S \mathbb{Y}_s$;
\STATE Spectral clustering on $\mathcal{D}_r$;
\STATE Populate cluster membership to all $S$ sites; 
\end{algorithmic}
\end{algorithm} 
If the DML transformation is linear, which is the case if it is implemented by K-means clustering or rpTrees,
then {\it the overall computational complexity is easily seen to be linear in the total number of points in 
the distributed data}. Indeed for large scale distributed computation, that is an implicit requirement. 
\section{Related work}
\label{section:related}
There has been an explosive growth of interests in distributed computing in the last decades. One driving force behind is the 
prevalence of low-cost clustered computers and storage systems \cite{SortingWorkstation1997,FoxGribble1997}, which makes it 
feasible to interconnect hundreds or thousands of clustered computers. Numerous systems and computing platforms have 
been developed for distributed computing. For example, Google's Bigtable \cite{GoogleFileSys2003,Bigtable2006}, the 
Apache Hadoop/Map-Reduce \cite{MapReduce2004, HDFS2010}, the Spark system \cite{ZahariaChowdhury2010,ZahariaChowdhury2012}, and Amzon's AWS cloud etc. The literature is huge, but mostly on distributed system architecture, computing platforms, or data query tools. 
For a review of recent developments, please see \cite{ChenMaoLiu2014,Dolev:2017TBD} and references therein.
\\
\\
Existing distributed algorithms in the literature are either parallel algorithms, for example, \cite{ChenSongChang2011}, or
use a divide-and-conquer strategy which split the data and distribute the workload to a number of nodes \cite{HefeedaGao2012}. 
An influential line of work is {\it Bag of Little Bootstrap} \cite{BagLittleBootstrap}. This work aims at 
computing a big data version of Bootstrap \cite{Efron1979}, a fundamental tool in statistical inference; the idea is to take many 
very ``thin" subsamples, distribute the computing on each subsample to a node and then aggregate results from those individual 
subsamples. Recently, \cite{BatteyFan2015} considered the general distributed estimation and inference 
in the Divide-and-Conquer paradigm, and obtained optimal partitions for data. Chen and Xie \cite{ChenXie2014} studied 
penalized regression and model selection consistency when the data is too big to fit into the memory of a single machine by 
working on a subsample of the data and then aggregating the resulting models. Singh et al \cite{Singh:2017TBD} designed 
DiP-SVM, a distribution preserving kernel support vector machine where the first and second order statistics of the data are 
retained in each of the data partitions, and run spectral clustering on each data partition at an individual nodes with results 
aggregated. One thing common about these work is that all assume one has the entire data before delegating the workload 
to individual nodes or pushing the data to temporary storage and then aggregate the individual results; the data are distributed 
or split mainly for improving computational efficiency or solving the memory shortage problem. 
\\
\\
Many works have been proposed to improve the efficiency of spectral clustering. Chen and Cai \cite{Chen2011LargeSS} 
proposed a landmark-based method for spectral clustering by selecting representative data points as a linear combination of the 
original data. Zhang and his co-authors \cite{Zhang2016SamplingFN} proposed an incremental sampling approach, i.e., the 
landmark points are selected one at a time adaptively based on the existing landmark points. Liu et al. \cite{Liu2017FastCS} 
proposed a fast constrained spectral clustering algorithm via landmark-based graph construction and then reduce the data size 
by random sampling after spectral embedding. Paiva \cite{Paiva2017InformationtheoreticDS} proposed to select a representative
subset of the training sample with an information-theoretic strategy. Lin et al \cite{Lin2017ASA} designed a scalable co-association 
cluster ensemble framework using a compressed version of co-association matrix formed by selecting representative points of the 
original data. 
\\
\\
Several recent work aims at reducing the data volume by deep learning. Aledhari et al \cite{Mohammed:2017TBD} proposed 
a deep learning based method to minimize large genomic DNA dataset for transmission through the internet. Banijamali et al 
\cite{Banijamali2017FastSC} integrated the recent deep auto-encoder technique into landmark-based spectral clustering. 
\section{Analysis of the algorithm}
\label{section:theory}
In our distributed framework, each node $s$ individually applies DML to generate codewords $\mathbb{Y}_s=\{Y_i^{(s)}:  i=1, 2, ..., n_s \}$, 
to be sent to a central node for spectral clustering. The results from spectral clustering are returned to node $s$ for the recovery 
of cluster membership for all points at node $s=1,...,S$. A crucial question is, does this approach work? Since each site performs 
DML locally and no site is using distributional information from others. How much additional error will be incurred under our 
framework, or will such an error vanish if the data is big? The goal of our analysis is to shed lights to these questions.
\\
\\
To carry out such an analysis is challenging. We are interested in the clustering error (an end-to-end error), but we only observe 
errors in terms of the local distortion in representing the points, $\mathbb{X}_s =\{X_i^{(s)}:  i=1, 2, ..., N_s \}$ by codeword 
$\mathbb{Y}_s$ for each $s=1,...,S$. We need to establish a connection between the clustering error and the distortion in data 
representation of a distributed (local) nature. 
\begin{figure}[h]
\hspace{-0.4cm}
\begin{center}
\includegraphics[scale=0.62,clip,angle=-90]{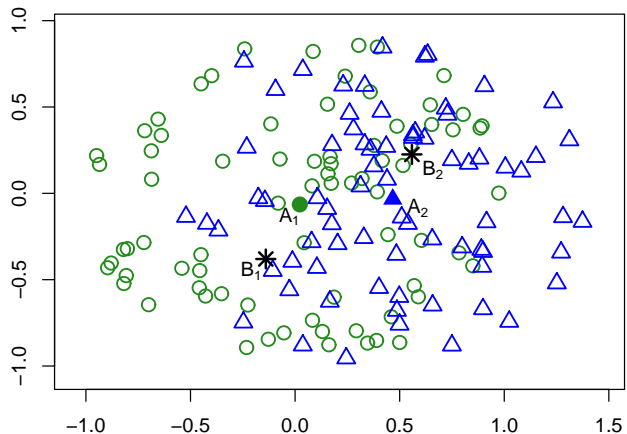}
\end{center}
\abovecaptionskip=-5pt
\caption{\it Illustration when the support of data from two different sites overlaps. Data with the same color indicate the same 
distributed node. The original codewords computed at each node are $A_1$ and $A_2$ (marker by circle and triangle, 
respectively), while the optimal codewords (assuming there are two) for the combined data are $B_1$ and $B_2$ (marked by stars).} 
\label{figure:overlap}
\end{figure}
\\
\\
A second challenge is related to the local `optimality' of our algorithm. That is, each set of codewords $\mathbb{Y}_s$ is an 
`optimal' representation of data, $\mathbb{X}_s$, at a local site, but their union, $\mathbb{Y}=\cup_{s=1}^S \mathbb{Y}_s$, 
is not necessary the optimal set of codewords for the entire set of data $\mathbb{X}=\cup_{s=1}^S \mathbb{X}_s$. One situation 
for which this happens is when there is an overlap among the support of points from different sites. As illustrated in Figure~\ref{figure:overlap}, 
the original codes $A_1, A_2$ are no longer the optimal codewords for the combined data; rather the new optimal codewords 
become $B_1$ and $B_2$. Now the question is, {\it will local `optimality' at all individual sites be sufficient or if additionally 
constraints may be required to make our approach work?}  
\\
\\
One crucial insight to our analysis is, what we really need is the convergence of the global distortion to zero when the size of 
data increases, not necessary the optimality of the global distortion. So we would proceed {\it as if} all the data are from one 
node, and only plugs in analysis of local DMLs when necessary.  
Our analysis hinges on an important result obtained in \cite{YanHuangJordan2009tech} which establishes 
the connection between the end-to-end error and the perturbations to the Laplacian matrix due to data distortion. 
\begin{lemma}[\cite{YanHuangJordan2009tech}] 
\label{lemma:errEigenLaplacian}
Under assumptions $\mathbb{A}_{1-3}$, the
mis-clustering rate $\rho$ of a spectral bi-partitioning algorithm
on perturbed data satisfies
\begin{equation*}
\label{eqn:mis-clustering-rate} 
\rho \leq
\|\bm{\tilde{v}}_2-\bm{v}_2\|^2 \leq ||\tilde{\mathcal{L}}-\mathcal{L}||_F^2,
\end{equation*}
where $||.||_F$ indicates the Frobenius norm \cite{GolubVanLoan1989}.
\end{lemma}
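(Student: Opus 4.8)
The statement is a chain of two inequalities, and the plan is to prove each link separately: the geometric bound $\rho \le \|\bm{\tilde{v}}_2 - \bm{v}_2\|^2$ tying the mis-clustering rate to the eigenvector perturbation, and the spectral bound $\|\bm{\tilde{v}}_2 - \bm{v}_2\|^2 \le \|\tilde{\mathcal{L}} - \mathcal{L}\|_F^2$ tying that perturbation to the Laplacian perturbation. Throughout I fix the sign of $\bm{\tilde{v}}_2$ so that $\langle \bm{v}_2, \bm{\tilde{v}}_2\rangle \ge 0$, which removes the eigenvector sign ambiguity and keeps the angle between the two vectors in $[0,\pi/2]$.

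For the first link, I would use the idealized structure of $\bm{v}_2$ guaranteed by $\mathbb{A}_{1-3}$: when the two clusters are well separated the second eigenvector of the normalized Laplacian is piecewise constant on each cluster, and in the balanced, degree-regular idealization its entries, after unit normalization, equal $\pm 1/\sqrt N$. Because the bipartition is produced by thresholding the components of the eigenvector, a point $i$ can be mis-clustered only if the perturbation drives its component across the threshold, which forces $|(\bm{\tilde{v}}_2)_i - (\bm{v}_2)_i| \ge |(\bm{v}_2)_i| = 1/\sqrt N$. Each of the $\rho N$ mis-clustered points thus contributes at least $1/N$ to the squared distance, so $\|\bm{\tilde{v}}_2 - \bm{v}_2\|^2 \ge (\rho N)(1/N) = \rho$, which is the first inequality.

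For the second link I would pass to the orthonormal eigenbasis $\{\bm{v}_j\}$ of $\mathcal{L}$ and expand the perturbed eigenvector to first order. With $E = \tilde{\mathcal{L}} - \mathcal{L}$, the standard perturbation formula gives $\bm{\tilde{v}}_2 - \bm{v}_2 \approx \sum_{j \ne 2} \frac{\langle \bm{v}_j, E\bm{v}_2\rangle}{\lambda_2 - \lambda_j}\,\bm{v}_j$, whence by Parseval $\|\bm{\tilde{v}}_2 - \bm{v}_2\|^2 \le \frac{1}{\delta^2}\sum_{j\ne 2}|\langle \bm{v}_j, E\bm{v}_2\rangle|^2 \le \frac{1}{\delta^2}\|E\bm{v}_2\|^2 \le \frac{1}{\delta^2}\|E\|_F^2$, where $\delta = \min_{j\ne 2}|\lambda_2 - \lambda_j|$ is the eigengap. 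Under the eigengap normalization $\delta \ge 1$ carried by $\mathbb{A}_{1-3}$ the prefactor is at most one, giving $\|\bm{\tilde{v}}_2 - \bm{v}_2\|^2 \le \|E\|_F^2$ and closing the chain. The rigorous substitute for the heuristic expansion is the Davis--Kahan $\sin\Theta$ theorem, which yields $\sin\theta \le \|E\|_F/\delta$ and hence, via $\|\bm{\tilde{v}}_2 - \bm{v}_2\|^2 = 2(1-\cos\theta)$, the same bound up to the small-angle control supplied by the smallness-of-perturbation hypothesis.

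I expect the second link to be the main obstacle rather than the first. Two points need care: establishing the eigengap lower bound $\delta \ge 1$ (equivalently, ruling out a near-degeneracy between $\lambda_2$ and the neighboring eigenvalues, which is exactly where $\mathbb{A}_{1-3}$ must be invoked), and controlling the higher-order terms dropped in the first-order expansion so that the clean, constant-free bound holds and not merely its leading-order version; I anticipate the smallness assumption in $\mathbb{A}_{1-3}$ to quantify how small $\|E\|_F$ is relative to $\delta$, which is precisely what both the first-order remainder estimate and the Davis--Kahan sine bound require. A secondary subtlety is that the first link uses the balanced, degree-regular idealization of $\bm{v}_2$; relaxing it replaces the clean $1/\sqrt N$ by a size- and degree-dependent lower bound on the entry magnitudes, so the balance and regularity content of $\mathbb{A}_{1-3}$ is what preserves the constant-free form of the stated bound.
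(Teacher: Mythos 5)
The paper offers no proof of this lemma: it is stated as imported verbatim from \cite{YanHuangJordan2009tech}, so there is nothing in the present source to compare your argument against step by step. That said, your two-link reconstruction --- the threshold-crossing count giving $\rho \leq \|\bm{\tilde{v}}_2-\bm{v}_2\|^2$ (each of the $\rho N$ mis-clustered points must have its component of $\bm{\tilde{v}}_2$ pushed across the rounding threshold, contributing at least $|(\bm{v}_2)_i|^2 = 1/N$ to the squared distance), followed by first-order eigenvector perturbation or Davis--Kahan for $\|\bm{\tilde{v}}_2-\bm{v}_2\|^2 \leq \|\tilde{\mathcal{L}}-\mathcal{L}\|_F^2$ --- is the standard argument and the one used in the cited technical report, and you correctly localize the only places where the unstated assumptions $\mathbb{A}_{1-3}$ must carry weight: the piecewise-constant, balanced form of $\bm{v}_2$ (without which the per-point contribution degrades to a balance-dependent constant), the eigengap condition that absorbs the $1/\delta^2$ prefactor, and the smallness of $\|\tilde{\mathcal{L}}-\mathcal{L}\|_F$ needed to control both the first-order remainder and the residual factor in $\|\bm{\tilde{v}}_2-\bm{v}_2\|^2 = 2(1-\cos\theta)$. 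Since $\mathbb{A}_{1-3}$ are never reproduced in this paper, the constant-free form of the chain cannot be checked from the text alone, but your sketch is a faithful reconstruction of the intended proof.
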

\noindent
By Lemma~\ref{lemma:errEigenLaplacian}, in order to bound the additional clustering error resulting from the
distributed nature of the data, we just need to bound the distortion of the Laplacian matrix due to a compressed 
data representation by codewords from a number of distributed sites. 
\\
\\
We will proceed in two steps. First, we conduct a perturbation analysis of the Laplacian matrix. Then, we will 
adopt results from local DMLs to the perturbation results. Note that in the perturbation analysis,
we treat all the data as if they are all from one site. We follow notations in \cite{YanHuangJordan2009tech}.
\\
\\
Our theoretical model is the following two-component Gaussian mixture
\begin{equation}
\label{eq:gm}
G =(1-\pi)\cdot G_1 + \pi \cdot G_2,
\end{equation}
where $\pi \in \{0,1\}$ with $\mathbb{P}(\pi=1)=\eta$. The choice of a two-component Gaussian
mixture is mainly for simplicity; it should be clear that our analysis applies to any finite Gaussian
mixtures. We treat data perturbation as adding a noise component $\epsilon$ to $X$:
\begin{equation}
\label{eq:noise}
\tilde{X}=X+\epsilon,
\end{equation}
and we denote the distribution of $\tilde{X}$ by $\tilde{G}$.
We assume $\epsilon$ is symmetric about $0$ with bounded support,
and let $\epsilon$ has a standard deviation $\sigma_{\epsilon}$ that
is small compared to $\sigma$, the standard deviation for the distribution of $X$.
\subsection{Perturbation analysis}
Our perturbation analysis relies on one more result established in \cite{YanHuangJordan2009tech} which is stated 
as Lemma~\ref{lemma:laplacianIneq}.
\begin{lemma}[\cite{YanHuangJordan2009tech}] 
\label{lemma:laplacianIneq}
Let $\mathcal{L}$ and $\tilde{\mathcal{L}}$ be the Laplacian matrix corresponding to the original similarity
matrix and that after perturbation. Then
\begin{multline}
\label{eq:pertlap_F} 
||\tilde{\mathcal{L}}-\mathcal{L}||_F \leq ||D^{-\frac{1}{2}}E D^{-\frac{1}{2}}||_F \\ +(1+o(1))||\Delta
D^{-\frac{3}{2}} AD^{-\frac{1}{2}}||_F. 
\end{multline}
\end{lemma}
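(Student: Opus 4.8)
The plan is to linearize $\tilde{\mathcal{L}}-\mathcal{L}$ in the perturbation and then bound the resulting terms term by term with the triangle inequality for $\|\cdot\|_F$. Write the perturbed affinity and degree matrices as $\tilde{A}=A+E$ and $\tilde{D}=D+\Delta$, where $\Delta$ is diagonal with $\Delta_{ii}=\sum_j E_{ij}$ since the degree is the row sum of the affinity. Starting from the form $\mathcal{L}=I-D^{-1/2}AD^{-1/2}$ of \eqref{eq:defLaplacian} and its perturbed counterpart $\tilde{\mathcal{L}}=I-\tilde{D}^{-1/2}\tilde{A}\tilde{D}^{-1/2}$, the identity terms cancel and
\[
\tilde{\mathcal{L}}-\mathcal{L}=D^{-1/2}AD^{-1/2}-\tilde{D}^{-1/2}\tilde{A}\tilde{D}^{-1/2}.
\]

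First I would linearize the inverse square root of the degree matrix. Because $D$ and $\Delta$ are both diagonal they commute, so $(D+\Delta)^{-1/2}=D^{-1/2}(I+D^{-1}\Delta)^{-1/2}$, and since $\epsilon$ has standard deviation small compared to $\sigma$ the matrix $D^{-1}\Delta$ is small in norm. A first-order (binomial) expansion then gives
\[
\tilde{D}^{-1/2}=D^{-1/2}-\tfrac{1}{2}D^{-3/2}\Delta+O\!\left(\|D^{-1}\Delta\|^2\right).
\]

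Next I would substitute $\tilde{A}=A+E$ and this expansion into the product $\tilde{D}^{-1/2}\tilde{A}\tilde{D}^{-1/2}$ and retain only the terms linear in $E$ and $\Delta$. The zeroth-order term $D^{-1/2}AD^{-1/2}$ cancels, leaving
\[
\tilde{\mathcal{L}}-\mathcal{L}=-D^{-1/2}ED^{-1/2}+\tfrac{1}{2}\Delta D^{-3/2}AD^{-1/2}+\tfrac{1}{2}D^{-1/2}A\Delta D^{-3/2}+R,
\]
where $R$ gathers the second- and higher-order contributions. By the symmetry of $A$ and the diagonality of $D$ and $\Delta$, the two middle terms are transposes of one another, hence have equal Frobenius norm.

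Finally I would apply the triangle inequality, merge the two equal cross terms into a single $\|\Delta D^{-3/2}AD^{-1/2}\|_F$, and absorb $\|R\|_F$ into the factor $(1+o(1))$ multiplying that term, which yields the claimed bound \eqref{eq:pertlap_F}. The main obstacle is precisely this last step: one must verify that $\|R\|_F$ is of strictly smaller order than $\|\Delta D^{-3/2}AD^{-1/2}\|_F$, so that it can legitimately be folded into $(1+o(1))$. This rests on the smallness assumption $\sigma_\epsilon\ll\sigma$ together with the bounded support and symmetry of $\epsilon$, which control $\|E\|$ and $\|\Delta\|$ and guarantee convergence of the expansion of $(I+D^{-1}\Delta)^{-1/2}$; the commuting diagonal structure of $D$ and $\Delta$ keeps the bookkeeping clean and avoids any ordering issues when factoring out the $D^{-1/2}$ pieces.
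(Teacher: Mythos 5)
The paper does not actually prove this lemma: it is imported verbatim from \cite{YanHuangJordan2009tech}, so there is no in-paper argument to compare against. Your linearization is, however, essentially the route taken in that reference: write $\mathcal{L}=I-D^{-1/2}AD^{-1/2}$, set $\tilde{A}=A+E$, $\tilde{D}=D+\Delta$ with $\Delta_{ii}=\sum_j E_{ij}$, expand $\tilde{D}^{-1/2}=D^{-1/2}-\tfrac{1}{2}D^{-3/2}\Delta+\cdots$, collect the first-order terms, and use that $\Delta D^{-3/2}AD^{-1/2}$ and $D^{-1/2}A\Delta D^{-3/2}$ are transposes (so their Frobenius norms coincide and the two halves merge into one term). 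So the decomposition and the key cancellation are right.

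Two points deserve tightening. First, the validity of the binomial expansion of $(I+D^{-1}\Delta)^{-1/2}$ does not follow from ``$\sigma_\epsilon$ small compared to $\sigma$'' alone; in this paper it is secured by the explicit hypothesis $\|\Delta D^{-1}\|_2=o(1)$ appearing in Theorem~\ref{thm:pertBound}, and you should invoke that condition rather than the informal smallness of the noise. Second, your remainder $R$ contains $E$--$\Delta$ cross terms such as $D^{-3/2}\Delta E D^{-1/2}=(\Delta D^{-1})(D^{-1/2}ED^{-1/2})$, which are naturally $o\bigl(\|D^{-1/2}ED^{-1/2}\|_F\bigr)$, i.e.\ small relative to the \emph{first} term of the bound --- but the stated inequality attaches the $(1+o(1))$ only to the \emph{second} term. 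Folding those cross terms into $(1+o(1))\|\Delta D^{-3/2}AD^{-1/2}\|_F$ tacitly assumes $\|D^{-1/2}ED^{-1/2}\|_F=O\bigl(\|\Delta D^{-3/2}AD^{-1/2}\|_F\bigr)$, which is not automatic; it does hold in the regime where the lemma is applied (both quantities are driven by the same $\epsilon_i$ and converge to comparable limits in the proof of Theorem~\ref{thm:pertBound}), but you should either justify that comparability or note that the first term also acquires a harmless $(1+o(1))$ factor. With those caveats made explicit, your argument is a correct reconstruction of the cited proof.
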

\noindent
We state without proof some elementary results.
\begin{lemma}
\label{lemma:easyInq}
Let $a, b \in \mathbb{R}$. Then the following holds
\begin{equation*}
(a-b)^2 \leq 2(a^2 +b^2), ~ (a-b)^4 \leq 8(a^4 +b^4).
\end{equation*}
\end{lemma}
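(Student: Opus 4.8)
The plan is to prove both inequalities by elementary algebra, obtaining the quartic bound as a consequence of the quadratic one. First I would dispatch $(a-b)^2 \le 2(a^2+b^2)$. Expanding the left-hand side gives $(a-b)^2 = a^2 - 2ab + b^2$, so the asserted inequality is equivalent to $0 \le a^2 + 2ab + b^2 = (a+b)^2$, which holds for every $a,b \in \mathbb{R}$ because a square is nonnegative. Equivalently, one may invoke the arithmetic--geometric-mean bound $2ab \le a^2 + b^2$ (itself just a rearrangement of $(a-b)^2 \ge 0$), whence $-2ab \le a^2 + b^2$ and therefore $(a-b)^2 = a^2 + b^2 - 2ab \le 2(a^2 + b^2)$.

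For the quartic inequality I would bootstrap from the quadratic one. Since both sides of $(a-b)^2 \le 2(a^2+b^2)$ are nonnegative, squaring preserves the inequality and yields $(a-b)^4 \le 4(a^2+b^2)^2$. It then remains only to control $(a^2+b^2)^2$. Expanding gives $(a^2+b^2)^2 = a^4 + 2a^2 b^2 + b^4$, and applying the same arithmetic--geometric-mean bound with $a^2$ and $b^2$ in the roles of $a$ and $b$ gives $2a^2 b^2 \le a^4 + b^4$. Hence $(a^2+b^2)^2 \le 2(a^4 + b^4)$, and combining the two bounds produces $(a-b)^4 \le 4 \cdot 2(a^4 + b^4) = 8(a^4 + b^4)$, as claimed.

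There is no genuine obstacle here; the only step deserving a word of care is the squaring, where one must observe that both sides are nonnegative so that the direction of the inequality is retained. I would also note in passing that the same template---prove a base inequality, square it, and reapply the base inequality to the cross term---iterates to give $(a-b)^{2^k} \le 2^{2^k-1}\bigl(a^{2^k} + b^{2^k}\bigr)$, of which only the cases $k=1,2$ are needed. These two bounds are exactly what will let us pass from squared coordinate-wise distortions to fourth powers when estimating the Frobenius-norm perturbation terms in the subsequent analysis built on Lemma~\ref{lemma:laplacianIneq}.
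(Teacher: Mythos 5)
Your proof is correct and complete: the first inequality reduces to $(a+b)^2 \ge 0$, and the second follows by squaring the first and bounding the cross term $2a^2b^2 \le a^4+b^4$. The paper itself states this lemma without proof (``We state without proof some elementary results''), so there is no argument to compare against; your write-up simply supplies the omitted elementary verification, and the side remark on iterating to $(a-b)^{2^k} \le 2^{2^k-1}(a^{2^k}+b^{2^k})$ is a correct bonus.
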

\noindent
The main result of our perturbation analysis is stated as Theorem~\ref{thm:pertBound}.
\begin{theorem}
\label{thm:pertBound}
Suppose $X_1,...,X_N \in \mathbb{R}^d$ are generated i.i.d. according to \eqref{eq:gm} such that
$\inf_{1\leq i \leq N} d_i /N>\delta_0$ holds in probability for some constant $\delta_0>0$. Further, 
the number of data points, $N_s$, at site $s$ carries a substantial fraction of the total number of data 
points $N$ in the sense that $\lim_{N \rightarrow \infty} N_s/N=\gamma_s \in (0,1)$ for $s=1,...,S$. 
Assume the data perturbation $\epsilon$ is symmetric about $0$ with bounded support. Further 
assume $||\Delta D^{-1}||_2=o(1)$. Then
\begin{eqnarray*}
||\tilde{\mathcal{L}}-\mathcal{L}||_F^2 \leq_{p}
C \sum_{s=1}^S \gamma_s \sigma_{s}^2 + C' \sum_{s=1}^S \gamma_s \sigma_{s}^4
\end{eqnarray*}
for some constants $C$ and $C'$, as $N \rightarrow \infty$.
\end{theorem}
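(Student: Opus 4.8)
The plan is to start from the two-term bound of Lemma~\ref{lemma:laplacianIneq} and reduce the whole problem to controlling the two Frobenius norms appearing there. Writing $\tilde{A}=A+E$ and $\tilde{D}=D+\Delta$ with $\Delta=\mathrm{diag}\big(\sum_j E_{ij}\big)$, Lemma~\ref{lemma:laplacianIneq} gives $||\tilde{\mathcal{L}}-\mathcal{L}||_F \leq ||D^{-1/2}ED^{-1/2}||_F+(1+o(1))||\Delta D^{-3/2}AD^{-1/2}||_F$, and the elementary inequality $(a+b)^2\leq 2(a^2+b^2)$ from Lemma~\ref{lemma:easyInq} lets me square and treat the two pieces $T_1^2:=||D^{-1/2}ED^{-1/2}||_F^2$ and $T_2^2:=||\Delta D^{-3/2}AD^{-1/2}||_F^2$ separately. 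The theorem then follows once I show each of $T_1^2,T_2^2$ is $\leq_p C\sum_s\gamma_s\sigma_s^2+C'\sum_s\gamma_s\sigma_s^4$.

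For the degree factors I would use the hypothesis $\inf_i d_i/N>\delta_0$, which holds in probability, to replace each $d_i^{-1}$ by at most $(\delta_0 N)^{-1}$. This turns $T_1^2=\sum_{i,j}E_{ij}^2/(d_i d_j)$ into $T_1^2\leq(\delta_0 N)^{-2}\sum_{i,j}E_{ij}^2$, and likewise bounds $T_2^2$ by $(\delta_0 N)^{-4}\sum_i\delta_i^2\sum_j a_{ij}^2\leq(\delta_0 N)^{-4}N\sum_i\delta_i^2$ after using $\sum_j a_{ij}^2\leq d_i\leq N$, where $\delta_i=\sum_j E_{ij}$. So it remains to control the two empirical sums $N^{-2}\sum_{i,j}E_{ij}^2$ and $N^{-3}\sum_i\delta_i^2$.

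Next I would expand the affinity perturbation entrywise. With a smooth affinity kernel, $E_{ij}=a(\tilde{X}_i,\tilde{X}_j)-a(X_i,X_j)$ is a smooth function of $(\epsilon_i,\epsilon_j)$ that vanishes at $\epsilon=0$; Taylor expanding to second order and using the bounded support of $\epsilon$ to control the remainder uniformly, $E_{ij}$ is a first-order term linear in $(\epsilon_i,\epsilon_j)$ plus a second-order term, so $E_{ij}^2$ is a polynomial in the $\epsilon$'s of degree up to four. Taking expectation over the independent, mean-zero, symmetric perturbations kills every odd moment, so the leading contribution to $\mathbb{E}[E_{ij}^2]$ is proportional to $\mathrm{Var}(\epsilon_i)+\mathrm{Var}(\epsilon_j)=\sigma_{s(i)}^2+\sigma_{s(j)}^2$ (weighted by bounded kernel-derivative factors), where $s(i)$ denotes the site to which $X_i$ belongs, and the remainder is $O(\sigma_{s(i)}^4+\sigma_{s(j)}^4)$. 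Grouping the double sum by site membership and using $N_s/N\to\gamma_s$, together with a law-of-large-numbers argument showing that the inner per-site averages converge to bounded constants, yields $N^{-2}\sum_{i,j}\mathbb{E}[E_{ij}^2]\to C\sum_s\gamma_s\sigma_s^2+C'\sum_s\gamma_s\sigma_s^4$; the same expansion handles $\delta_i$, whose conditional mean given $\epsilon_i$ is $O(N\sigma_{s(i)})$, making $N^{-3}\sum_i\delta_i^2$ of the same order.

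The step I expect to be the main obstacle is the passage from expectations to the $\leq_p$ bound on the random empirical sums, i.e.\ the concentration argument. The summands in $\sum_{i,j}E_{ij}^2$ are not independent — they share perturbation variables and the $X_i$ are themselves random — so I would treat these as V-statistics and show their fluctuations around the means are of lower order, invoking the bounded support of $\epsilon$ and the boundedness of the kernel. The term $T_2$ is the more delicate of the two because $\delta_i$ is a sum of $N$ weakly dependent entries sharing $\epsilon_i$; controlling the cross-correlations $\mathbb{E}[E_{ij}E_{ij'}]$ and verifying that the diagonal ($j=j'$) and off-diagonal contributions combine to the claimed $O(N^2\sigma_{s(i)}^2)$ scaling is where the assumption $||\Delta D^{-1}||_2=o(1)$ enters, guaranteeing that the perturbation stays in the regime where the second-order expansion is valid and the $(1+o(1))$ factor of Lemma~\ref{lemma:laplacianIneq} is legitimate.
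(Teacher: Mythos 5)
Your skeleton matches the paper's: reduce via Lemma~\ref{lemma:laplacianIneq} to the two Frobenius-norm terms, square with $(a+b)^2\le 2(a^2+b^2)$, absorb the degree factors using $\inf_i d_i/N>\delta_0$, and Taylor-expand the kernel perturbation so that the entries of $E$ are controlled by second- and fourth-order polynomials in the $\epsilon$'s (this is what the cited Lemmas 13--14 of \cite{YanHuangJordan2009tech} deliver, in the form $|E_{ij}|^2 \lesssim C(\epsilon_i-\epsilon_j)^2 + R_{max}^2(\epsilon_i-\epsilon_j)^4$). Where you diverge is at the decisive last step. You propose to compute $\mathbb{E}[E_{ij}^2]$ and then prove concentration of the double sum $N^{-2}\sum_{i,j}E_{ij}^2$ as a V-statistic --- and you correctly identify this as the main obstacle, but you leave it unresolved. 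The paper deliberately avoids that route: it applies the elementary inequalities of Lemma~\ref{lemma:easyInq}, $(\epsilon_i-\epsilon_j)^2\le 2(\epsilon_i^2+\epsilon_j^2)$ and $(\epsilon_i-\epsilon_j)^4\le 8(\epsilon_i^4+\epsilon_j^4)$, to collapse the double sums into single-index sums $\frac{1}{N}\sum_i\epsilon_i^2$ and $\frac{1}{N}\sum_i\epsilon_i^4$. These can then be partitioned by site and handled with the ordinary strong law of large numbers applied separately within each site, which together with $N_s/N\to\gamma_s$ immediately yields $\sum_s\gamma_s\sigma_s^2$ and $\sum_s\gamma_s\sigma_s^4$. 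This decoupling is the one genuinely new ingredient of the proof relative to the non-distributed analysis, and it is precisely what makes the heterogeneity across sites harmless.

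The gap in your version is therefore concrete: your concentration step requires a law of large numbers for a two-index sum whose summands share perturbation variables that are \emph{not identically distributed} across sites (each site has its own $\sigma_s$), so the standard Hoeffding SLLN for U-statistics of an i.i.d.\ sample does not apply off the shelf; you would need a multi-sample or non-identically-distributed variant, plus control of the cross-correlations you mention for the $\Delta$ term. None of that machinery is needed once the double sum is decoupled. Also, a small correction on one side remark: the assumption $\|\Delta D^{-1}\|_2=o(1)$ is what licenses the $(1+o(1))$ factor in Lemma~\ref{lemma:laplacianIneq} itself (it controls the expansion of $\tilde{D}^{-1/2}$ around $D^{-1/2}$); it is not the ingredient that validates the second-order Taylor expansion of the kernel, which instead rests on the bounded support and small $\sigma_\epsilon$ of the perturbation.
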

\begin{proof}
Our proof essentially follows that of Theorem 5 and Theorem 6 in \cite{YanHuangJordan2009tech} except during the final steps 
of Lemma 13 and Lemma 14. Instead of using theory of U-statistics \cite{Hoeffding1961} which would give a sharper bound, 
we use an upper bound, i.e., Lemma~\ref{lemma:easyInq}, which allows us to group the perturbation error of individual data points 
by the distributed site they belong to. By adopting the proof of Lemma 13, we have
\begin{eqnarray*}
&& ||D^{-\frac{1}{2}} E D^{-\frac{1}{2}}||_F^2\\
&\leq& \frac{2C}{\delta_0^2N^2}\sum_{i=1}^N \sum_{j=1}^N (\epsilon_i
- \epsilon_j)^2 + \frac{2}{\delta_0^2N^2}\sum_{i=1}^N \sum_{j=1}^N
R_{max}^2(\epsilon_i-\epsilon_j)^4 \\
&\leq& \frac{4C}{\delta_0^2N^2}\sum_{i=1}^N \sum_{j=1}^N (\epsilon_i^2
+ \epsilon_j^2) + \frac{16}{\delta_0^2N^2}\sum_{i=1}^N \sum_{j=1}^N
R_{max}^2(\epsilon_i^4 + \epsilon_j^4) \\
&=& \frac{8C}{\delta_0^2N}\sum_{i=1}^N \epsilon_i^2 +
 + \frac{32}{\delta_0^2N}\sum_{i=1}^N
R_{max}^2 \epsilon_i^4 \\
&\overrightarrow{a.s.}& C_1 \sum_{s=1}^S \gamma_s \sigma_{s}^2 + C_2 \sum_{s=1}^S \gamma_s \sigma_{s}^4
\end{eqnarray*}
as $N \rightarrow \infty$. In the above $\epsilon_i$ is the perturbation to observation $X_i$ for $i=1,...,N$. 
Similarly in the proof of Lemma 14, we have 
\begin{eqnarray*}
&& ||\Delta D^{-\frac{3}{2}}AD^{-\frac{1}{2}} ||_F^2\\
&\leq& \frac{2}{\delta_0^4N^2}\sum_{i=1}^N\sum_{k=1}^N \left[C(\epsilon_i-\epsilon_k)^2+R_{max}^2(\epsilon_i-\epsilon_k)^4\right]\\
&\overrightarrow{a.s.}& C_3 \sum_{s=1}^S \gamma_s \sigma_{s}^2 + C_4 \sum_{s=1}^S \gamma_s \sigma_{s}^4
\end{eqnarray*}
as $N \rightarrow \infty$.
Combining the above two inequalities and then apply Lemma~\ref{lemma:laplacianIneq}, we have proved the theorem.
\end{proof}
\subsection{Quantization errors by K-means clustering}
Existing work from vector quantization~\cite{QuantError,Quantization}
allows us to characterize the amount of distortion when the
representative set is computed by $K$-means clustering.
\\
\\
Let a quantizer $q$ be defined as $q: \mathbb{R}^d \mapsto
\{y_1,\ldots,y_n\}$ for $y_i \in \mathbb{R}^d$.  For $X$ generated
from a random source in $\mathbb{R}^d$, let the distortion of $q$ be
defined as: $\mathcal{D}(q)=\mathbb{E}||X-q(X)||^s$, which is the
mean square error for $s=2$.  Let $R(q)=\log_2 n$ denote the rate of
the quantization code.
Define the distortion-rate function $\delta(R)$ as
\begin{equation*}
\delta(R)=\inf_{q:~R(q) \leq R} \mathcal{D}(q).
\end{equation*}
Then $\delta(R)$ can be characterized in terms of the source density
$f(\cdot)$, and constants $d, s$ by the following theorem.
\begin{theorem}[\cite{QuantError,Quantization}]
\label{thm:quantization} Let $f(x)$ be the $d$-dimensional probability
density. Then, for large rates $R$, the distortion-rate function of
fixed-rate quantization can be characterized as:
\begin{equation*}
\delta_d(R) \cong b_{s,d}\cdot||f||_{d/(d+s)} \cdot n^{-s/d},
\end{equation*}
where $\cong$ means the ratio of the two quantities tends to 1,
$b_{s,d}$ is a constant depending on $s$ and $d$, and
\begin{equation*}
||f||_{d/(d+s)}=\left(\int f^{d/(d+s)}(x)dx\right)^{(d+s)/d}.
\end{equation*}
\end{theorem}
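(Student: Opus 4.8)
The plan is to reconstruct the classical high-resolution (high-rate) argument behind this Zador--Gersho-type formula. The starting point is to describe a near-optimal $n$-point quantizer in the large-$n$ regime not by listing its codewords but by a \emph{point density function} $\lambda(\cdot)\geq 0$ with $\int_{\mathbb{R}^d}\lambda(x)\,dx=1$, with the interpretation that the number of codewords in an infinitesimal region around $x$ is approximately $n\lambda(x)\,dx$. The quantization cell (Voronoi region) containing $x$ then has volume on the order of $[n\lambda(x)]^{-1}$, and since $f$ varies slowly on the scale of a single cell when $n$ is large, I would treat $f$ as locally constant over each cell.

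First I would carry out the local cell analysis. Within a cell of volume $v$ whose shape is fixed up to scaling, the contribution to $\mathcal{D}(q)=\mathbb{E}\|X-q(X)\|^s$ is $f(x)$ times the $s$-th moment of the cell about its centroid, and a scaling argument shows this moment equals $b\cdot v^{1+s/d}$ for a dimensionless constant $b$ depending only on the cell shape and on $s,d$. Summing over cells and passing to the integral gives
\begin{equation*}
\mathcal{D}(q)\;\cong\;b_{s,d}\,n^{-s/d}\int_{\mathbb{R}^d} f(x)\,\lambda(x)^{-s/d}\,dx,
\end{equation*}
where $b_{s,d}$ is the infimum of the normalized moment over admissible space-filling cell shapes in $\mathbb{R}^d$; this infimum is a well-defined constant, which is all the statement requires (its exact value being known only in small dimensions).

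It then remains to minimize $\int f\,\lambda^{-s/d}$ over point densities $\lambda$ subject to $\int\lambda=1$. I would do this by H\"older's inequality with conjugate exponents $p=(d+s)/d$ and $q=(d+s)/s$. Writing $f^{d/(d+s)}=(f\lambda^{-s/d})^{d/(d+s)}\cdot\lambda^{s/(d+s)}$ and applying H\"older yields
\begin{equation*}
\int f^{\frac{d}{d+s}}\,dx\;\leq\;\Big(\int f\,\lambda^{-s/d}\,dx\Big)^{\frac{d}{d+s}}\Big(\int\lambda\,dx\Big)^{\frac{s}{d+s}},
\end{equation*}
and since $\int\lambda=1$, raising to the power $(d+s)/d$ gives the lower bound $\int f\,\lambda^{-s/d}\,dx\geq\|f\|_{d/(d+s)}$. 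Equality in H\"older forces $f\lambda^{-s/d}\propto\lambda$, i.e. the optimal density $\lambda^{*}\propto f^{d/(d+s)}$, which is attainable; substituting it back produces exactly the factor $\|f\|_{d/(d+s)}$ and hence the stated formula.

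The routine parts are the scaling identity for the cell moment and the H\"older optimization. The hard part will be making the local cell analysis rigorous: justifying that $f$ may be frozen over each cell, that near-optimal quantizers are asymptotically described by a smooth point density with asymptotically congruent cells of the optimal shape, and that all the $o(1)$ errors collected in the $\cong$ relation vanish uniformly as $n\to\infty$. This is precisely the delicate asymptotic content established in \cite{QuantError,Quantization}, and I would invoke their machinery rather than re-derive it to close the gap between the heuristic integral above and a genuine limit statement.
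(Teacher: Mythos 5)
The paper gives no proof of this theorem: it is stated as an imported result, cited verbatim from Zador and from Gray--Neuhoff, and is used only as a black box in deriving the bound of Theorem~\ref{thm:kmBound}. So there is no in-paper argument to compare yours against. Your reconstruction is the standard high-resolution (Zador--Gersho) derivation, and the two steps you actually carry out are correct: the scaling of the $s$-th cell moment as $b\cdot v^{1+s/d}$, and the H\"older optimization over point densities with conjugate exponents $(d+s)/d$ and $(d+s)/s$, whose equality case yields $\lambda^{*}\propto f^{d/(d+s)}$ and, on substitution, exactly the factor $\|f\|_{d/(d+s)}$ as defined in the statement. One caveat: your identification of $b_{s,d}$ as the infimum of the normalized moment over space-filling cell shapes is Gersho's conjecture, which is proved only in low dimensions; Zador's actual theorem defines $b_{s,d}$ instead as the limit of $n^{s/d}$ times the optimal $n$-point distortion for the uniform density on the unit cube, and the rigorous proof proceeds by reducing a general density to piecewise-constant ones via a self-similarity/subadditivity argument rather than via congruent optimal cells. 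Since the statement only requires $b_{s,d}$ to be some constant depending on $s$ and $d$, this discrepancy is harmless here. You are also right that the genuinely hard content --- existence of the limit and uniform control of the errors hidden in $\cong$ --- is precisely what the cited references establish, and deferring to them is appropriate given that the paper itself does exactly that.
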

\noindent
Now we can apply Theorem~\ref{thm:quantization} and Pollard's strong consistency of K-means clustering \cite{Pollard1981} 
to individual terms in Theorem~\ref{thm:pertBound}, and establish a performance bound for our approach.
\begin{theorem}
\label{thm:kmBound}
Assume the data distributions in a S-site distributed environment have density $f_1,...,f_S$, respectively. 
Let $k_1,...,k_S$ be the number of codewords at the $S$ distributed sites. Let $k = min(k_1,...,k_S)$.
Then the additional clustering error rate $\rho$, as compared to non-distributed setting, under our framework 
can be bounded by
\begin{equation*}
C \cdot \max_{s=1,...,S}  ||f_s||_{d/(d+2)}.k^{-2/d}  + O(k^{-4/d}),
\end{equation*}  
where $C$ is a constant determined by the number of clusters, the variance of the original data, the bandwidth
of the Gaussian kernel and the eigengap of Laplacian matrix (or minimal eigengap of the Laplacian of all
affinity matrices used in normalized cuts).
\end{theorem}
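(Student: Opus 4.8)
The plan is to chain the three results already in hand. Lemma~\ref{lemma:errEigenLaplacian} reduces the end-to-end clustering error to a Laplacian perturbation, Theorem~\ref{thm:pertBound} converts that perturbation into a weighted sum of per-site distortion moments $\sigma_s^2$ and $\sigma_s^4$, and Theorem~\ref{thm:quantization} together with Pollard's consistency pins those distortions down in terms of $\|f_s\|_{d/(d+2)}$ and $k_s$. Concretely, I would begin by invoking Lemma~\ref{lemma:errEigenLaplacian} to write $\rho \leq \|\tilde{\mathcal{L}}-\mathcal{L}\|_F^2$, and then Theorem~\ref{thm:pertBound} to bound the right-hand side, in probability as $N\to\infty$, by $C\sum_{s=1}^S \gamma_s \sigma_s^2 + C'\sum_{s=1}^S \gamma_s \sigma_s^4$.

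The key identification is that the perturbation $\epsilon_i$ attached to a point $X_i$ at site $s$ is exactly the quantization residual $X_i - q_s(X_i)$, where $q_s$ is the $K$-means quantizer with $k_s$ codewords built from the data at that site. Consequently the per-site second moment $\sigma_s^2 = \mathbb{E}\|X^{(s)} - q_s(X^{(s)})\|^2$ is precisely the $K$-means distortion $\mathcal{D}(q_s)$, while $\sigma_s^4 = \mathbb{E}\|X^{(s)} - q_s(X^{(s)})\|^4$ is the corresponding order-$4$ distortion. By Pollard's strong consistency of $K$-means \cite{Pollard1981}, the empirical codewords asymptotically attain the optimal distortion, so the large-rate characterization of Theorem~\ref{thm:quantization} applies to them: applied with exponent $2$ and source density $f_s$ it gives $\sigma_s^2 \cong b_{2,d}\,\|f_s\|_{d/(d+2)}\,k_s^{-2/d}$, and applied with exponent $4$ it gives $\sigma_s^4 = O(k_s^{-4/d})$.

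It remains to assemble the bound. Since $\sum_{s=1}^S \gamma_s = 1$ and $k_s \geq k = \min_s k_s$, monotonicity of $t \mapsto t^{-2/d}$ gives $\sum_s \gamma_s \sigma_s^2 \leq b_{2,d}\, k^{-2/d} \sum_s \gamma_s \|f_s\|_{d/(d+2)} \leq b_{2,d}\, k^{-2/d} \max_{s} \|f_s\|_{d/(d+2)}$, and the same argument yields $\sum_s \gamma_s \sigma_s^4 = O(k^{-4/d})$. Folding $b_{2,d}$ and the constant of Theorem~\ref{thm:pertBound} into a single $C$ produces $\rho \leq_p C \cdot \max_{s=1,\dots,S} \|f_s\|_{d/(d+2)} \cdot k^{-2/d} + O(k^{-4/d})$, which is the claim; the stated dependence of $C$ on the number of clusters, the data variance, the kernel bandwidth and the eigengap is inherited through the constant of Theorem~\ref{thm:pertBound} and the recursive bipartitioning behind Lemma~\ref{lemma:errEigenLaplacian}.

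The main obstacle I anticipate is not the algebra but the legitimacy of applying the asymptotic vector-quantization rate to the finite, data-dependent $K$-means output, and doing so site by site. Theorem~\ref{thm:quantization} is a high-rate statement about the optimal quantizer for a fixed density, whereas what our algorithm produces is an empirical $K$-means solution at a finite number of codewords $k_s$; bridging this gap is exactly what Pollard's consistency is invoked for, and one must additionally check that the residual at each site satisfies the hypotheses of Theorem~\ref{thm:pertBound}, namely symmetry about $0$ and bounded, shrinking support together with $\|\Delta D^{-1}\|_2 = o(1)$. Since the symmetry and support-shrinkage of the residual hold only in the high-rate regime where the cells are small, the cleanest route is to state the result asymptotically in $k$ and let each $k_s \geq k \to \infty$, which is consistent with the $O(\cdot)$ form of the conclusion.
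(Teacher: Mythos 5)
Your proposal is correct and follows essentially the same route the paper intends: it chains Lemma~\ref{lemma:errEigenLaplacian}, Theorem~\ref{thm:pertBound}, and Theorem~\ref{thm:quantization} with Pollard's consistency, identifying the perturbation at each site with the $K$-means quantization residual and collapsing the per-site rates via $k=\min_s k_s$ and $\sum_s\gamma_s=1$. The paper itself supplies only the one-sentence instruction to ``apply Theorem~\ref{thm:quantization} and Pollard's strong consistency to individual terms in Theorem~\ref{thm:pertBound},'' so your write-up (including the caveat about transferring the high-rate asymptotic to the empirical quantizer and verifying the symmetry and bounded-support hypotheses) is a faithful and somewhat more careful elaboration of the argument the authors had in mind.
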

\section{Experiments}
\label{section:experiments}
In this section, we report our experimental results. This includes simulation results on synthetic data, and on data from the UC Irvine
Machine Learning Repository \cite{UCI}. We will compare the performance of distributed vs {\it non-distributed} (where all the data are 
assumed to be in one place). The spectral clustering algorithm used is normalized cuts \cite{ShiMalik2000}, and the Gaussian kernel 
is used in computing the affinity (or Gram) matrix with the bandwidth chosen via a cross-validatory search in the range (0, 200] (with 
step size 0.01 within (0,1], and 0.1 over (1,200]) for each data set. All algorithms are implemented in {\it R programming language}, and 
the {\it kmeans()} function in R is used for which details are similar as those documented in \cite{YanHuangJordan2009tech}. 
\\
\\
The metric for clustering performance is {\it clustering accuracy}, which counts the fraction of
labels given by a clustering algorithm that agree with the true labels (or labels come with the dataset). 
Let $\{1,...,K\}$ denote the set of class labels, and $h(.)$ and $\hat{h}(.)$ are the
true label and the label given by the clustering algorithm, respectively. The clustering accuracy is defined as \cite{YanHuangJordan2009tech}
\begin{equation}
\label{eq:clusterAccuracy}
\max_{\tau \in \Pi} \left\{\frac{1}{N}\sum_{i=1}^N
\mathbb{I}\{\tau(h(x_i))=\hat{h}(x_i)\}\right\},
\end{equation}
where $\mathbb{I}$ is the indicator function and $\Pi$ is the
set of all permutations on the class labels $\{1,...,K\}$. While there are dozens of performance metrics around for
clustering, the clustering accuracy is a good choice for the evaluation of clustering algorithms. This is because 
the label (or cluster membership) of individual data points is the ultimate goal of clustering, while other clustering metrics 
are often a surrogate of the cluster membership, and they are used in practice mainly due to the lack of labels. 
For the evaluation of clustering algorithms, we do have the freedom of using those datasets coming with a label. Indeed the 
clustering accuracy is commonly used for the evaluation of clustering; see, for example, \cite{XingNgJordanRussell2002,BachJordan2006,YanHuangJordan2009tech}. 
\\
\\
We also consider the {\it time for computation}. The elapsed time is used. It is counted from the moment when the data are loaded 
into the memory (R running time environment), till the time when we have obtained the cluster label for all data points. We assume 
all the distributed nodes run independently, so the longest computation time among all the sites is used (instead of adding them up). 
As we do not have multiple computers for the experiments, we do not explore the communication time for transmitting representative 
points and the clustering results. Indeed, for the dataset used in our experiments, such time could be ignored when compared to the 
computation time, as the number of representative points are all less than 2000. The time reported in this work are 
produced on a {\it MacBook Air} laptop computer, with 1.7GHz Intel Core i7 processor and 8G memory. 
\subsection{Synthetic data}
\label{section:expSynthetic}
To appreciate how those representative points look like, we first consider a toy example. 
\begin{figure}[h]

\hspace{-0.6cm}
\begin{center}
\includegraphics[scale=0.32,clip, angle=-90]{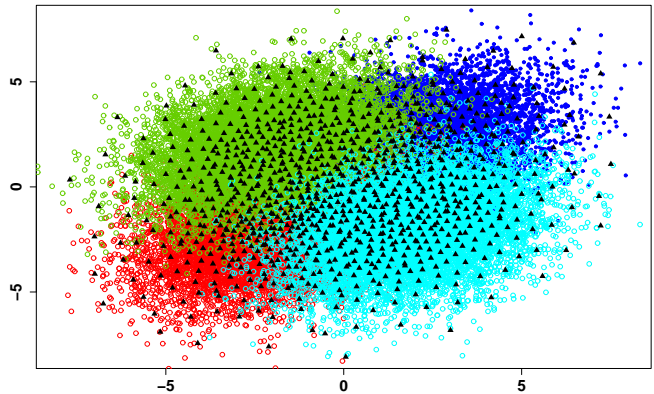}
\end{center}
\caption{\it A 4-component Gaussian mixture. Different clusters are indicated by colors. The triangle spots are representative 
points. Site 1 corresponds to red and blue colors, and Site 2 green and cyan.} 
\label{figure:gmix}
\end{figure}
The data is generated by a 4-component Gaussian mixture $\frac{1}{4}\sum_{i=1}^4 \mathcal{N}(\mu_i, \Sigma) $ 
with $\mu_1=(2,2)$, $\mu_2=(-2,-2)$, $\mu_3=(-2,2)$ and $\mu_4=(2,-2)$, and 
the covariance matrix given by \[\Sigma=\begin{bmatrix} 
3 & 1 \\
1 & 3 \end{bmatrix}.\]
The Gaussian mixture is commonly used as a generating model for clusters in the literature \cite{Dasgupta1999,MclachlanPeel2000,LuxburgBelkin2008, TACOMA, CF}, due to its simplicity (as the mixture components are Gaussians) and also because it is amenable for theoretical analysis. 
Indeed, it is used as the theoretical model in our analysis (c.f. Section~\ref{section:theory}). Note that here we are not arguing that spectral 
clustering is superior to other clustering methods, rather our claim was spectral clustering could work in a distributed setting that we consider. 
\\
\\
Figure~\ref{figure:gmix} is a scatter plot of the Gaussian mixture. The four mixture components are marked by {\it blue}, {\it red},
{\it green}, and {\it cyan}, respectively. Here in generating representative points, we run $K$-means clustering on the two sites 
defined by $\mathcal{N}(\mu_1, \Sigma) + \mathcal{N}(\mu_2, \Sigma) $ and $\mathcal{N}(\mu_3, \Sigma)  + \mathcal{N}(\mu_4, \Sigma) $, respectively.
It can be seen that those representative points serve as a ``good" condensed representation of the original data points. 
\\
\\
We conduct experiments on a 4-component Gaussian mixture on $\mathbb{R}^{10}$
\begin{equation}
\frac{1}{4}\sum_{i=1}^4 \mathcal{N}(\mu_i, \Sigma) 
\label{equation:gm}
\end{equation}
with the center of the four components being 
\begin{eqnarray*}
&& \mu_1=(2.5,0,...,0), ~\mu_2=(0,2.5,0,...,0), \\
&& \mu_3=(0,0,2.5,0,...,0), ~\mu_4=(0,0,0,2.5,0,...,0),
\end{eqnarray*}
and the covariance 
matrix $\Sigma$ defined by 
\begin{equation*}
\Sigma_{i,j}=\rho^{\vert i-j \vert}, ~~\mbox{for}~ \rho=0.1, 0.3, 0.6.
\end{equation*}
For simplicity, we only consider the case with two sites in a distributed environment. It should be easy to extend to the general case. 
Let $\mathcal{C}_i, i=1, 2, 3, 4$, denote the data from the four mixture components, respectively. To simulate how the data may
look like in a distributed environment, we create the following three scenarios:
\begin{itemize}
\item[$D_1$:] Site 1 has $\mathcal{C}_1 + \mathcal{C}_2$, Site 2 has those from $\mathcal{C}_3 + \mathcal{C}_4$ \vspace{0.02in}
\item[$D_2$:] Site 1 has $\frac{1}{2}\mathcal{C}_1 + \mathcal{C}_2+ \frac{1}{2}\mathcal{C}_3$, Site 2 has $ \frac{1}{2}\mathcal{C}_1 + \frac{1}{2}\mathcal{C}_3 + \mathcal{C}_4$ \vspace{0.02in}
\item[$D_3$:] Site 1 and 2 each has a randomly selected half of the data points
\end{itemize}  
where $\frac{1}{2} \mathcal{C}_1$ means that a distributed site contains half of the data points from component  $\mathcal{C}_1$, 
and so on. Note these scenarios are {\it not} the different ways that we split and distribute the data for fast computation rather each 
should be viewed as one type of distributed settings: $D_1$ for which data at different sites have roughly disjoint supports, $D_2$
for which data at different sites have some overlap in terms of supports, and in $D_3$ individual sites have similar data distribution.
40000 data points are generated from the Gaussian mixture \eqref{equation:gm}, and the number of representative points 
is $1000$ (i.e., the data compression ratio is 40:1). The number of data points at Site 1 and 2, and also the number of representative 
points can all be calculated by the site specification and the data compression ratio accordingly. Note that here the number of clusters 
in K-means clustering (or the number of leaf nodes in rpTrees) is no longer as important as that in usual clustering, as long as the resulting
groups are sufficiently fine (i.e., data points in the same group are ``similar"); since here K-means clustering
(or rpTrees) is used to group the data for efficient distributed computation. The same discussion applies to UC Irvine datasets as well. 
\begin{figure}[h]
\hspace{-0.2cm}
\begin{center}
\includegraphics[scale=0.54,clip,angle=-90]{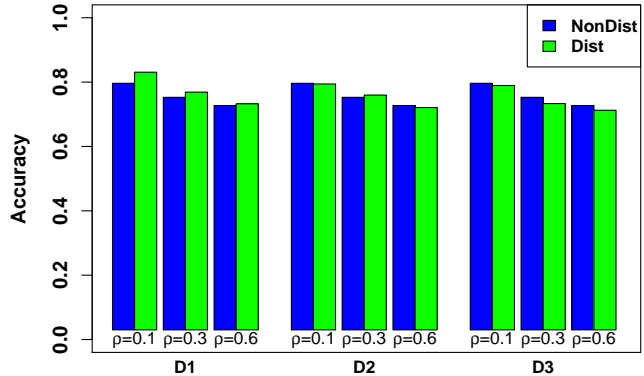}
\end{center}
\abovecaptionskip=-20pt
\caption{\it Clustering accuracy by spectral clustering on a 4-component Gaussian mixture with different $\rho$'s
when K-means clustering is used as DML. $D_1,D_2$ and $D_3$ are 3 simulation scenarios, and 
there are two distributed sites in all simulations.} 
\label{figure:barSynKmeans}
\end{figure}
\begin{figure}[h]
\vspace{-0.1in}
\hspace{-0.3cm}
\begin{center}
\includegraphics[scale=0.54,clip,angle=-90]{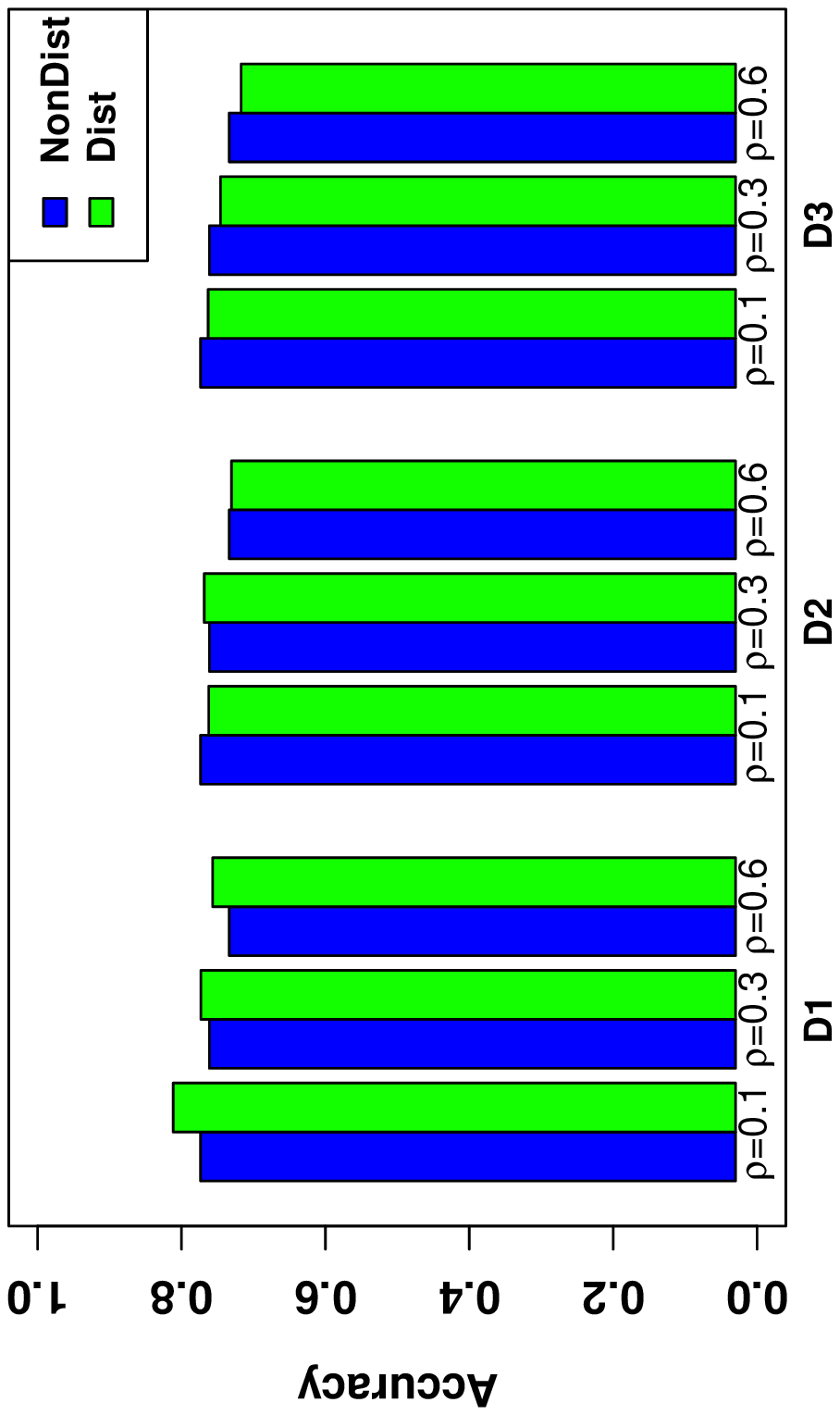}
\end{center}
\abovecaptionskip=-20pt
\caption{\it Clustering accuracy by spectral clustering on a 4-component Gaussian mixture with different $\rho$'s
when rpTrees is used as DML. $D_1,D_2$ and $D_3$ are 3 simulation scenarios, and 
there are two distributed sites in all simulations.} 
\label{figure:barSynRPtrees}
\end{figure}
\\
\\
Figure~\ref{figure:barSynKmeans} and Figure~\ref{figure:barSynRPtrees} show the clustering accuracy of spectral 
clustering under various distributed scenarios as compared to non-distributed when K-means clustering and rpTrees are used as 
DML, respectively. Here the true label of each data point is taken as its mixture component ID. The data compression 
ratio for K-means clustering is set to be 40:1, which is the ratio of the size of the data to the number of clusters in DML; 
the maximum size of the leaf nodes is 40 for rpTrees, to match approximately the data compression ratio in K-means 
clustering. In all cases, the accuracy computed for distributed data is close to non-distributed. The reason that the accuracy 
under $D_1$ may be higher than others (including the non-distributed) may be attributed to the fact the data are {\it less mixed} 
than in other settings thus implicitly achieving a {\it regularization effect} where data are grouped within the same classes 
or the part of data that are less mixed than others (one can roughly view this as additional constraints on clustering). 
\begin{table}[htb]
\begin{center}
\setlength{\extrarowheight}{2pt}
\begin{tabular}{c|crc}
    \hline
\textbf{Data set}     & \textbf{\# Features}  &\textbf{\# instances}  &\textbf{\# classes} \\
    \hline
Connect-4   & 42       &67,557             &3\\
SkinSeg      & 3         &245,057	       &2\\
USCI           & 37       &285,779            &2\\
Cover Type	&54		&568,772				&5\\
HT Sensor   &11        &928,991             &3\\
Poker Hand  & 10       &1,000,000         &3\\
Gas Sensor  &18		& 8,386,765		 &2\\
HEPMASS    &28     &10,500,000         &2\\
\hline
\end{tabular}
\end{center}
\caption{\it A summary of UC Irvine data used in the experiments. } \label{tbl:data}
\end{table}
\subsection{UC Irvine data}
\label{section:expUCI}
\begin{table}[htb]
\begin{center}
\setlength{\extrarowheight}{2pt}
\begin{tabular}{@{}c@{}|c|@{}c@{}|@{}c@{}|@{}c@{}|c|c}
    \hline
\textbf{Data set}&\multicolumn{2}{c}{$\bm{D_1}$}&\multicolumn{2}{|c}{$\bm{D_2}$}&\multicolumn{2}{|c}{$\bm{D_3}$} \\[2pt]
    \hline
USCI&\multirow{4}{*}{$\mathcal{C}_1$}&\multirow{4}{*}{$\mathcal{C}_2$}&~\multirow{4}{*}{0.7$\mathcal{C}_1$+0.3$\mathcal{C}_2$}~ &~\multirow{4}{*}{0.3$\mathcal{C}_1$+0.7$\mathcal{C}_2$}~ &\multirow{4}{*}{50\%} & \multirow{4}{*}{50\%}\\\cline{1-1}
SkinSeg        &     & & &    & &  \\\cline{1-1}
Gas Sensor  &     & & &    & &  \\\cline{1-1}
HEPMASS   &     & & &    & &\\\hline
Connect-4&\multirow{3}{*}{$\mathcal{C}_1$}&~\multirow{3}{*}{$\mathcal{C}_2$+$\mathcal{C}_3$}~ &\multirow{3}{*}{0.5$\mathcal{C}_1$+$\mathcal{C}_2$}
&\multirow{3}{*}{0.5$\mathcal{C}_1$+$\mathcal{C}_3$} &\multirow{3}{*}{50\%}&\multirow{3}{*}{50\%}\\\cline{1-1}
HT Sensor  &    &    &   &    & &  \\\cline{1-1}
~Poker Hand~  &    &    &   &    & &  \\\hline
\multirow{2}{*}{Cover type}&\multirow{2}{*}{$\mathcal{C}_2$}&~\multirow{2}{*}{$\mathcal{C}_1$+$\mathcal{C}_{3-5}$}~ &\multirow{1}{*}{0.7$\mathcal{C}_1$+0.3$\mathcal{C}_2$}
&\multirow{2}{*}{0.3$\mathcal{C}_1$+0.7$\mathcal{C}_2$} &\multirow{2}{*}{50\%}&\multirow{2}{*}{50\%}\\
&&&\multirow{1}{*}{+$\mathcal{C}_{3-5}$}&&&\\
\hline
\end{tabular}
\end{center}
\caption{\it A summary of simulation settings. Here $D_1, D_2, D_3$ are 3 distributed simulation scenarios 
considered in this work. } \label{tbl:simulationSetting}
\end{table}
The UC Irvine datasets we use include the Connect-4, SkinSeg (Skin segmentation), USCI (US Census Income), 
Cover type, HT Sensor, Poker Hand, Gas Sensor and the HEPMASS data. Table~\ref{tbl:data} gives a summary 
of the datasets. For Connect-4, 
USCI, and Poker Hand data, we follow procedures described in \cite{YanHuangJordan2009tech} 
to preprocess the data. The original {\it USCI} data has 299,285 instances with 41 features. We excluded instances with 
missing values, and also features \#26, \#27, \#28 and \#30, due to too many values. This leaves 285,799 instances on 
$37$ features, with all categorical variables converted to integers. The original {\it Cover Type} data has 581,012 instances. 
We excluded the two small classes (i.e., 4 and 5) for fast evaluation of accuracy (otherwise all 7! permutations need to be 
evaluated, but that is not our focus of the feasibility of distributed computing), and this leaves 568,772 instances; 
we also standardized the first 10 features such that each has a mean 0 and variance 1. The original {\it Poker Hand} data is 
highly unbalanced, with $6$ small classes containing less than $1\%$ of the data. Merging small classes gives $3$ 
final classes with a class distribution of $50.12\%$, $42.25\%$ and $7.63\%$, respectively. The {\it Gas Sensor} data 
consists of two different gas mixtures: Ethylene mixed with CO, and Ethylene mixed with Methane. The data corresponding
to these two gas mixtures form the two classes in the data. The Connect-4, the USCI,  and the Gas Sensor data are 
normalized such that all features have mean $0$ and standard deviation $1$. 
\\
\\
It is worthwhile to remark that, though the datasets we use may be smaller than some large data at industry-scale, the Gas 
Sensor and the HEPMASS are among the largest datasets currently available in the UC Irvine data repository. Moreover, in industry
often the raw data used may be huge but, after preprocessing and feature engineering, the data used for data mining or 
model building are substantially smaller. Additionally, the UC Irvine datasets used in our experiments have a wide variety of 
sizes, ranging from 67,557 to 10,500,000. 
We believe those are sufficient to demonstrate the feasibility of our algorithm for distributed computing; for even larger data, 
results from our theoretical analysis (large sample asymptotics) can be used to gain insights.  
\\
\\
Similar as the synthetic data, we conduct experiments under three distributed settings for each UC Irvine dataset.
Table~\ref{tbl:simulationSetting} lists the site specification under such settings. These settings are 
all common in practice. $D_1$ corresponds to situations where different sites have data with disjoint supports,
and $D_2$ the case where data from different sites are mixed in terms of the support of the data, while
$D_3$ is for the case where the data at each site is a random sample from a common data distribution. Each of $D_1$, $D_2$ and $D_3$
in Table~\ref{tbl:simulationSetting} corresponds to two columns with the left indicating the data composition for 
Site 1 and the right for Site 2 (under $D_3$ each of the two distributed sites has a random half of the full data).
The {\it number of data points} at Site 1 and 2, and also {\it the number of representative points} can all be calculated by 
the site specification and the data compression ratio accordingly.
\begin{table}[htb]
\begin{center}
\setlength{\extrarowheight}{2pt}
\begin{tabular}{c|c|c|c|c}
    \hline
\textbf{Data set}     &\textbf{Non-distributed} & $\bm{D_1}$  &$\bm{D_2}$  &$\bm{D_3}$ \\
    \hline
Connect-4   &0.6569 & 0.6576       &0.6569           &0.6569\\
                    & 36       & 17           &15                     &12\\
                    \hline
SkinSeg           &0.9482 &0.9516        &0.9406          &0.9425\\
				&18     &13               &9         &8\\
\hline
USCI           &0.9356 & 0.9382       &0.9404          &0.9396\\
				&215     & 199              &104         &63\\
\hline
Cover type           &0.4984 & 0.4987       &0.4979          &0.4986\\
				&402     & 221              &207         &264\\
\hline
HT Sensor      &0.4960 & 0.5008       &0.4972          &0.4978\\
				&55     & 23              &16         &23\\
\hline
Poker Hand  &0.4977 & 0.4993      &0.4982         &0.5006\\
                     &123      & 67      &65         &62 \\
\hline
Gas Sensor  &0.9865 & 0.9887      &0.9866         &0.9846\\
                     &620      & 287      &258         &254 \\
\hline
HEPMASS  &0.7929 & 0.7949      &0.7927         &0.7902\\
                     &8752      & 3275      &3263         &3321 \\
\hline
\end{tabular}
\end{center}
\caption{\it Clustering accuracy and running time on UC Irvine data under 3 simulation settings, 
$D_1,D_2$ and $D_3$, when K-means clustering is used as the DML. The time 
is in seconds. There are two distributed sites under all of $D_1, D_2$ and $D_3$. } 
\label{tbl:expUCIkMeans}
\end{table}
\\
\\
Table~\ref{tbl:expUCIkMeans} show the clustering accuracy and elapsed time for each UC Irvine dataset under settings 
$D_1, D_2$, and $D_3$, respectively, when K-means clustering is used as DML. The data compression ratios are 200, 
800, 500, 500, 3000, 3000, 16000 and 7000, respectively. Table~\ref{tbl:expUCIrpTrees} is for rpTrees. Due to the random 
nature of rpTrees, it is hard to set the exact data compression ratio. To match the compression ratio for K-means clustering, 
we set the maximum size of leaf nodes to be 200, 800, 500, 500, 3000, 3000, 16000 and 7000, respectively. It can be seen 
that, for all cases, the loss in clustering accuracy due to distributed computing is small. By local computation at individual 
distributed sites in parallel, the overall time required for spectral clustering is significantly reduced. When the computation 
is evenly distributed across different sites, the saving in time is often most substantial; this is typically the case under $D_3$ 
where data are evenly split among two sites. At the similar level of data compression, DML by rpTrees is more efficient in 
computation than that by K-means clustering, but incurs slightly more loss in accuracy. Note that for simplicity, we consider 
two distributed sites for all the UC Irvine data except the HEPMASS data. For the rest of this section, we explore 3 or 4 
distributed sites for the HEPMASS data. 
\begin{table}[htb]
\begin{center}
\setlength{\extrarowheight}{2pt}
\begin{tabular}{c|c|c|c|c}
    \hline
\textbf{Data set}     &\textbf{Non-distributed} & $\bm{D_1}$  &$\bm{D_2}$  &$\bm{D_3}$ \\
    \hline
Connect-4   &0.6577 & 0.6560       &0.6554           &0.6550\\
                    & 7       & 6           &5                     &5\\
                    \hline
SkinSeg           &0.9492 &0.9535        &0.9434          &0.9432\\
				&5     &4               &3         &3\\
\hline
USCI           &0.9394 & 0.9391       &0.9394          &0.9371\\
				&36     & 33              &23         &19\\
\hline
Cover type      &0.4978 & 0.4984       &0.4956          &0.4968\\
				&81     & 50              &52         &53\\
\hline
HT Sensor      &0.4957 & 0.4963       &0.4893          &0.4836\\
				&26     & 10              &7         &7\\
\hline
Poker Hand  &0.4990 & 0.4999      &0.4976         &0.4993\\
                     &47      & 26      &25         &24 \\
\hline
Gas Sensor  &0.9828 & 0.9850      &0.9811         &0.9792\\
                     &305      & 174      &175         &169 \\
\hline
HEPMASS  &0.7906 & 0.7920      &0.7902         &0.7890\\
                     &1568      & 695      &688         &712 \\
\hline
\end{tabular}
\end{center}
\caption{\it Clustering accuracy and running time on UC Irvine data under 3 simulation settings, $D_1,D_2$ and $D_3$, 
when rpTrees is used as the DML. The time is in seconds. There are two distributed sites under all of $D_1, D_2$ 
and $D_3$. } 
\label{tbl:expUCIrpTrees}
\end{table}

\subsubsection{Multiple sites}
To study the impact of the number of distributed sites to our algorithm, we run our algorithm on the HEPMASS data for 3 or 4 
distributed sites. The site configuration is shown in Table~\ref{tbl:settingMultiDist} where, for completeness, we also include 
the case of two sites from the previous section. The results are shown in Table~\ref{tbl:expMultiDist}. We can see that in all 
cases, the clustering accuracy does not or degrade very little when increasing of the number of distributed sites. For both cases, i.e., 
with K-means or rpTrees as the DML, the running time decreases when more sites are involved. However, the decreasing 
trend gradually slows down; such a phenomenon is more pronounced for the case of rpTrees as the local computation takes
much less time than by K-means. This is because, as more distributed sites are 
involved, eventually most of the computation time would be on spectral clustering part. 
\begin{table}[htb]
\begin{center}
\setlength{\extrarowheight}{3pt}
\begin{tabular}{c|c|l}
\hline
\textbf{\# Sites}     & \textbf{Settings}  &\textbf{Site configuration}  \\
    \hline
\multirow{3}{*}{2}   & $D_1$       &Site1: $C_1$, ~Site2: $C_2$            \\\cline{2-3}
      & $D_2$         &Site1: $0.7C_1+0.3C_2$, ~Site2: $0.3C_1+0.7C_2$\\\cline{2-3}
           & $D_3$       &Data randomly partitioned among sites          \\
    \hline
\multirow{4}{*}{3}   & $D_1$       &Site1: $C_1/2$, ~Site2: $C_1/2$, ~Site3: $C_2$            \\\cline{2-3}
      & $D_2$         &Site1: $C_1/2+C_2/4$, ~Site2: $C_1/4+C_2/4$, \\
&&Site3: $C_1/4+C_2/2$	       \\\cline{2-3}
           & $D_3$       &Data randomly partitioned among sites           \\
\hline
\multirow{5}{*}{4}   &$D_1$        &Site1: $C_1/2$, ~Site2: $C_1/2$, ~Site3: $C_2/2$, \\
&&Site4: $C_2/2$             \\\cline{2-3}
  & $D_2$       &Site1: $3/8C_1+C_2/8$, ~Site2: $3/8C1+C_2/8$, \\
&&Site3: $C_1/8+3/8C_2$, ~Site4: $C_1/8+3/8C_2$         \\\cline{2-3}
  &$D_3$		& Data randomly partitioned among sites		 \\
\hline
\end{tabular}
\end{center}
\caption{\it Simulation setting for HEPMASS data with multiple distributed sites. } \label{tbl:settingMultiDist}
\end{table}
\begin{table}[htb]
\begin{center}
\setlength{\extrarowheight}{2pt}
\begin{tabular}{c|c|c|c|c}
    \hline
\textbf{Non-distributed} &\textbf{DML}      & $\bm{D_1}$  &$\bm{D_2}$  &$\bm{D_3}$ \\
    \hline
                     &kmeans$_2$    & 0.7949      &0.7927         &0.7902\\
                     &                         &3275       &3263         & 3321\\\cline{2-5}
   0.7929       &kmeans$_3$    & 0.7937       &0.7913           &0.7904\\
    8752       &                         & 3159           & 2469                    &2106\\\cline{2-5}
                    &kmeans$_4$     &0.7905        &0.7887          &0.7891\\
			   &                          & 1474              &1413         &1465\\
\hline
			   &rptrees$_2$       & 0.7920      &0.7902         &0.7890\\
                    &                          &695       & 688        &712 \\\cline{2-5}
0.7906         &rptrees$_3$       & 0.7895       &0.7900          &0.7866\\
1568	         &                          &   678            & 564        &526\\\cline{2-5}
                   &rptrees$_4$       & 0.7876       &0.7884          &0.7869\\
			  &                           &  387             &396         &401\\
\hline
\end{tabular}
\end{center}
\caption{\it Clustering accuracy and running time (in seconds) on the HEPMASS data under settings, $D_1,D_2$ and $D_3$, 
respectively. The subscripts next to the DML are the number of distributed sites. } 
\label{tbl:expMultiDist}
\end{table}
\section{Conclusion}
\label{section:conclusion}
We have proposed a novel framework that enables spectral clustering for distributed data, with ``minimal" communication 
overhead while a major speedup in computation. 
Our approach is statistically sound in the sense that the achieved accuracy is as good as that when all the data are 
in one place. Our approach achieves computational speedup by deeply compressing the data with DMLs (which also 
sharply reduce the amount of data transmission) and leveraging existing computing resources for local parallel computing 
at individual nodes. The speedup in computation, compared to that in a non-distributed setting, is expected to scale 
linearly (with a potential of even faster when the data is large enough) with the number of distributed nodes when the data 
are evenly distributed across individual sites. Indeed, on all large UC 
Irvine datasets used in our experiments, our approach achieves a speedup of about 2x when there are two distributed 
sites. Two concrete implementations of DMLs are explored, including that by K-means clustering and by rpTrees. Both 
can be computed efficiently, i.e., computation (nearly) {\it linear} in the number of data points. One additional feature of our 
framework is that, as the transmitted data are not in their original form, data privacy may also be preserved. 
\\
\\
Our proposed framework is promising as a general tool for data mining over distributed data. Methods developed under our 
framework will allow practitioners to use potentially much larger data than previously possible, or to attack problems 
previously not feasible, due to the lack of data for various reasons, such as the challenges in big data transmission 
and privacy concerns in data sharing.

\section*{Appendix}
\subsection{The K-means clustering algorithm}
\label{section:appendixKmeans}
Formally, given $n$ data 
points, $K$-means clustering seeks to find a partition of $K$ sets $S_1, S_2, ..., S_K$ such that the 
within-cluster sum of squares, $SS_W$, is minimized
\begin{equation}
\label{equation:kmeansFormu}
\underset{S_1,S_2,...,S_K} {\operatorname{arg\,min}} \sum_{i=1}^{K} \sum_{\mathbf x \in S_i} \left\| \mathbf x - \boldsymbol\mu_i \right\|^2,
\end{equation}
where $\mu_i$ is the centroid of $S_i, i=1,2,...,K$.
\\
\\
Directly solving the problem formulated as \eqref{equation:kmeansFormu} is hard, as it is an integer 
programming problem. Indeed it is a NP-hard problem \cite{ArthurVassilvitskii2006}. 
The K-means clustering algorithm is often referred to a popular implementation sketched as
Algorithm~\ref{alg:kmeans} below. For more details, one can refer to \cite{hartiganWong1979, lloyd1982}.
\begin{algorithm}
\caption{\textbf{$K$-means clustering algorithm}}
\label{alg:kmeans}
\begin{algorithmic}[1]
\STATE Generate an initial set of $K$ centroids $m_1, m_2, ..., m_K$;
\STATE Alternate between the following two steps
\STATE \hspace{\algorithmicindent} Assign each point $x$ to the ``closest" cluster 
	\begin{equation*}     
        \arg \min_{j \in \{1,2,...,K\}} \big \| x-m_j \big \|^2;
        \end{equation*}
\STATE \hspace{\algorithmicindent} Calculate the new cluster centroids
	\begin{equation*}
	m_j^{new} = \frac{1}{\| S_j\|} \sum_{x \in S_j} x, ~~j=1,2,...,K;
        \end{equation*}
\STATE Stop when cluster assignment no longer changes.
\end{algorithmic}
\end{algorithm}
\subsection{The random projection tree algorithm}
\label{section:appendixRPtrees}
in this section, we given an algorithmic description of the generation of rpTrees which
we adopt from \cite{rpForests2018c}.
Let $U$ denote the given data set. Let $t$ denote the rpTree to be built from $U$. Let $\mathcal{W}$ denote the set of 
working nodes. Let $n_T$ denote a predefined constant for the minimal number of data points in a tree node for which 
we will split further. Let $P_{\stackrel{\rightharpoonup}{r}}(x)$ denote the projection coefficient of point $x$ onto line 
$\stackrel{\rightharpoonup}{r}$. Let $\mathcal{N}$ denote the set of neighborhoods, and each element of $\mathcal{N}$ 
is a subset of neighboring points in $U$.
\begin{algorithm}
\caption{\it~~rpTree(U)}
\label{algorithm:neighGen}
\begin{algorithmic}[1]
\STATE Let $U$ be the root node of tree $t$; 
\STATE Initialize the set of working nodes $\mathcal{W} \leftarrow \{U\}$; 
\WHILE {$\mathcal{W}$ is not empty}
	\STATE Randomly pick $W \in \mathcal{W}$ and set $\mathcal{W} \leftarrow \mathcal{W} - \{W\}$; 
	\IF{$|W| < n_T$} 
		\STATE Skip to the next round of the while loop; 
	\ENDIF 
    	\STATE Generate a random direction $\stackrel{\rightharpoonup}{r}$;  
	\STATE Project points in $W$ onto $\stackrel{\rightharpoonup}{r}$, and let $W_{\stackrel{\rightharpoonup}{r}}=\{r \boldsymbol{\cdot} x: x \in W\}$; 
	\STATE Let $a=\min(W_{\stackrel{\rightharpoonup}{r}})$ and $b=\max(W_{\stackrel{\rightharpoonup}{r}})$; 
	\STATE Generate a splitting point $c \sim runif[a,b]$; 
	\STATE Split node $W$ by $W_L=\{x: P_{\stackrel{\rightharpoonup}{r}}(x) < c\}$ and $W_R=\{x: P_{\stackrel{\rightharpoonup}{r}}(x) \geq c\}$; 
	\STATE Update working set by $\mathcal{W} \leftarrow \mathcal{W} \cup \{W_L, W_R\}$; 
\ENDWHILE
\STATE return(t); 
\end{algorithmic}
\end{algorithm}


\end{document}